\icmltitlerunning{On Random Subsampling of Gaussian Process Regression: A Graphon-Based Analysis}
\newtheorem{theorem}{Theorem}[section]
\newtheorem{lemma}[theorem]{Lemma}
\newtheorem{definition}[theorem]{Definition}
\newtheorem{corollary}[theorem]{Corollary}
\newcommand{\bbN}{\mathbb{N}}
\newcommand{\bbR}{\mathbb{R}}
\newcommand{\bmf}{{\bm f}}
\newcommand{\bmk}{{\bm k}}
\newcommand{\bmu}{{\bm u}}
\newcommand{\bmv}{{\bm v}}
\newcommand{\bmw}{{\bm w}}
\newcommand{\bmx}{{\bm x}}
\newcommand{\bmy}{{\bm y}}
\newcommand{\bmz}{{\bm z}}
\newcommand{\bmxi}{{\bm \xi}}
\newcommand{\bmDelta}{{\bm \Delta}}
\newcommand{\bmone}{{\bm 1}}
\newcommand{\bmalpha}{{\bm \alpha}}
\newcommand{\calv}{\mathcal{v}}
\newcommand{\calk}{\mathcal{k}}
\newcommand{\caly}{\mathcal{y}}
\newcommand{\calA}{\mathcal{A}}
\newcommand{\calK}{\mathcal{K}}
\newcommand{\calH}{\mathcal{H}}
\newcommand{\rmd}{\mathrm{d}}
\newcommand{\set}[1]{\{#1\}}
\newcommand{\Ep}{\mathbb{E}}
\newcommand{\argmin}{\mathop{\mathrm{argmin}}}
\renewcommand{\hat}{\widehat}
\renewcommand{\tilde}{\widetilde}
\newcommand{\nystrom}{Nystr\"{o}m\xspace}
\begin{document}

\twocolumn[
\icmltitle{On Random Subsampling of Gaussian Process Regression:\\ A Graphon-Based Analysis}

\begin{icmlauthorlist}
\icmlauthor{Kohei Hayashi (Preferred Networks)}{}
\icmlauthor{Masaaki Imaizumi (Institute of Statistical Mathematics)}{}
\icmlauthor{Yuichi Yoshida (National Institute of Informatics)}{}
\end{icmlauthorlist}

\vskip 0.3in
]

\begin{abstract}

In this paper, we study random subsampling of Gaussian process regression, one of the simplest approximation baselines, from a theoretical perspective. Although subsampling discards a large part of training data, we show provable guarantees on the accuracy of the predictive mean/variance and its generalization ability.
For analysis, we consider embedding kernel matrices into graphons, which encapsulate the difference of the sample size and enables us to evaluate the approximation and generalization errors in a unified manner. The experimental results show that the subsampling approximation achieves a better trade-off regarding accuracy and runtime than the \nystrom and random Fourier expansion methods.
\end{abstract}

\section{Introduction}

Gaussian process regression (GPR) is a fundamental tool for supervised learning. After learning parameters, we can make predictions in a distributional form, which is useful for measuring the uncertainty of the predictions. Of course, to enjoy such flexibility, we need to pay the price --- computationally. For the number of samples $n$, both training (parameter learning) and the computation of the predictive distributions require polynomial time in $n$. The dominant part is the computation of the inverse of the $n$-by-$n$ kernel matrix, which requires $O(n^3)$ time.

To reduce the time complexity, a lot of sophisticated approximation methods have been developed. Most of them introduce some structure into the kernel matrix to approximate it. For example, the \nystrom method~\cite{williams2001using} approximates the kernel matrix with a low-rank matrix. Given a shift-invariant kernel function, the random Fourier expansion (RFE)~\cite{rahimi2008random} approximately constructs a feature function in a finite-dimensional space. Several methods exploit specific properties of kernel matrices~\cite{Pleiss:2018vp,wilson2015kernel}.

A more drastic approach is \emph{subsampling}, i.e., training GPR with a subset of the data. If we pick subsamples completely randomly, the time complexity only depends on the subsample size $s$, which is independent of $n$. While its simplicity and the computational cheapness, random subsampling has been seen as a baseline rather than a competitive method in the GP community~\cite{rasmussen2004gaussian,snelson2007flexible,quinonero2005unifying}. One of the main reasons is that it completely discards a large part of the data, and it seems impossible to estimate the uncertainties~\cite{quinonero2005unifying}. Also, its theoretical justification is non-trivial, because subsampling changes the size of the kernel matrix. This is contrastive to the case of the structure-based approximations, which retain the size of the kernel matrix as $n$-by-$n$ and can directly evaluate its approximation accuracy as the prediction accuracy, whereas they require at least $\Omega(n)$ computational cost in training.

In this paper, we study the subsampling approximation of GPR from a theoretical perspective. Somewhat unexpectedly, our main results show that subsampling can maintain global information with a sufficiently small number of subsamples. More specifically, with any bounded data and kernel functions, $\Theta(s)$ subsamples guarantee $O(\log^{-1/4} s)$ prediction error at any new data point.

For analysis, we exploit the machinery of \emph{graphons}, a continuous limit of bounded symmetric matrices, which have effectively been used in graph theory (see~\cite{Lovasz:2012wn}).
Embedding the kernel matrices into graphons abstracts their difference in terms of the (sub)sample size, which enables to evaluate the predictive mean/variance without using strong statistical assumptions (Theorem~\ref{the:prediction-linfty}). Because graphons can handle infinitely large matrices, i.e., the kernel matrices with $n\to\infty$, the result is immediately applicable to evaluate the generalization error (Corollary~\ref{cor:gen}).
Moreover, we show that, with a constant number of subsamples, hyperparameter tuning based on cross-validation (CV) succeeds with a high probability (Theorem~\ref{thm:cv2}).
We performed experiments that provided encouraging results of subsampling in terms of the speed-accuracy trade-off.





\section{Preliminaries}
For an integer $n \in \bbN$, we denote the set $\set{1,2,\ldots,n}$ by $[n]$.
For $a,b\in \bbR$ and $c \in \bbR_+$, we mean $b -c \leq a \leq b + c$ by $a = b \pm c$.

For vectors $\bmx,\bmy\in \bbR^p$, $\langle \bmx,\bmy\rangle$ denotes their inner product.
For a vector $\bmx\in \bbR^d$ and a set $S \subseteq [n]$, $\bmx_S \in \bbR^{|S|}$ denotes the vector obtained by restricting $\bmx$ to $S$.
Similarly, for a matrix $A \in \bbR^{n \times m}$ and sets $S \subseteq [n]$ and $T \subseteq [m]$, $A_{ST} \in \bbR^{|S| \times |T|}$ denotes the matrix obtained by restricting $A$ to $S \times T$.
For a matrix $A \in \bbR^{n \times m}$, we define $\|A\|_{\max}$ as $\max_{i\in [n],j\in [m]}|A_{ij}|$.
$\mathscr{N}(\mu,\sigma^2)$ denotes the Gaussian distribution with mean $\mu$ and variance $\sigma^2$.

\subsection{Gaussian Process Regression}
Let $(\bmx_1,y_1),\ldots,(\bmx_n,y_n) \in \bbR^p \times \bbR$ be training samples.
The goal of the GPR is to obtain a predictive distribution for $f^*(\bmx^*)$ when a new sample $\bmx^* \in \bbR^p$ arrives.
In this work, we consider the zero-mean GP prior for $f$ with the covariance kernel function $k\colon \bbR^p \times \bbR^p \to \bbR$.
When the variance of the observation noise is specified as $\nu^2>0$, the predictive distribution for $f^*(\bmx^*)$ is given as the following Gaussian distribution:
\begin{align}
	&\mathscr{N}\Bigl(\bmk^T{(K + \nu^2 I)}^{-1} \bmy, k(\bmx^*,\bmx^*) - \bmk^T{(K + \nu^2 I)}^{-1} \bmk\Bigr) \label{def:pred}
\end{align}
where $K \in \bbR^{n \times n}$ is the kernel matrix with $K_{ij} = k(\bmx_i,\bmx_j)\;(i,j\in [n])$ and $\bmk = {(k(\bmx^*,\bmx_i))}_{i \in [n]} \in \bbR^n$ (see Section~2 of~\cite{rasmussen2004gaussian} for more details).

Let $\calH$ be the reproducing kernel Hilbert space (RKHS) associated with $k(\cdot, \cdot)$.
For a vector $\bmx \in \bbR^p$, let $\phi_{\bmx} = k(\bmx, \cdot) \in \calH$ be the element corresponding to $\bmx$.
Note that $k(\bmx_i,\bmx_j) = \langle \phi_{\bmx_i},\phi_{\bmx_j}\rangle_{\calH}$.
We define a linear operator $\Phi\colon \bbR^n \to \calH$ as $\Phi(\bmw) = \sum_{i \in [n]}\phi_{\bmx_i}w_i$.

\subsection{Graphons and Matrices}\label{sec:graphon}
A (measurable) bounded symmetric function $f\colon {[0, 1]}^2 \to \bbR$ is called a \emph{graphon}\footnote{Precisely speaking, such a function is called a \emph{kernel} and a (measurable) symmetric function $f\colon {[0, 1]}^2 \to [0,1]$ is called a \emph{graphon} in the literature. However, to avoid confusion with the kernel function $k(\cdot,\cdot)$, we adopt the term graphon here.}.
We can regard a graphon as a matrix in which the index is specified by a real value in $[0, 1]$.
For two functions $f, g \colon [0, 1] \to \bbR$, we define their \emph{inner product} as $\langle f,g\rangle  = \int_0^1 f(x)g(x)dx$.
We also define their \emph{outer product} $f g^\top\colon {[0,1]}^2 \to \bbR$ as $fg^\top(x,y)=f(x)g(y)$.
For a graphon $\calA\colon {[0,1]}^2 \to \bbR$ and a function $f \colon [0,1] \to \bbR$, we define the function $\calA f \colon [0, 1] \to R$ as $(\calA f )(x) = \langle \calA(x, \cdot), f \rangle$.

For an integer $n \in \bbN$, let $I_1^n =[0,\frac{1}{n}]$, and for every $1<k\leq n$, let $I_k^n =(\frac{k-1}{n},\frac{k}{n}]$. For $x\in[0,1]$, we define $i_n(x)$ as the unique integer $k \in [n]$ such that $x \in I_k^n$.

\begin{definition}
  Given a vector $\bmv \in \bbR^n$, we construct the corresponding function $\calv\colon [0,1]\to \bbR$ as $\calv(x) = v_{i_n(x)}$.
  In addition, given a set of indices $S \subseteq [n]$, when we write $\calv_{S}$, we first extract the vector $\bmv_{S} \in \bbR^{|S|}$ and then consider its corresponding function.
  Similarly, given a matrix $A \in \bbR^{n \times n}$, we construct the corresponding graphon $\calA\colon{[0,1]}^2 \to \bbR$ as $\calA(x, y) = A_{i_n(x)i_n(y)}$.
  In addition, given two sets of indices $S \subseteq [n]$ and $T \subseteq [n]$, when we write $\calA_{ST}$, we first extract the matrix $A_{ST}\in \bbR^{|S|\times |T|}$ and then consider its corresponding graphon.
\end{definition}

For a graphon $\calA\colon {[0,1]}^2 \to \bbR$, its \emph{cut norm} is defined as
\[
  \|\calA\|_\square = \max_{S,T \subseteq [0,1]}\Bigl|\int_S \int_T \calA(x,y) \rmd x \rmd y\Bigr|,
\]
where $S$ and $T$ run over all the measurable sets.

The following lemma states that we can approximate a matrix with its small submatrix with respect to the cut norm of the difference of their corresponding graphons.
\begin{lemma}[\cite{Hayashi:2016wh}]\label{lem:Hayashi}
  Let $L>0$ and let $A^1,\ldots,A^T \in {[-L,L]}^{n \times n}$ be matrices.
  Let $S \subseteq [n]$ be a set of $s$ elements that are uniformly selected at random.
  Then, with a probability of at least $1 - \exp(-\Omega(s T/ \log_2s))$,
  there exists a measure-preserving bijection $\pi\colon[0,1] \to [0,1]$ such that, for every $t \in [T]$, we have
  \[
    \|\calA^t - \pi(\calA^t_{SS})\|_{\square} = O\Bigl(L\sqrt{T/\log_2 s}\Bigr).
  \]
  Moreover, we can assume $i_n(\pi(x))=i_n(\pi(y))$ whenever $i_n(x)=i_n(y)$, that is, $\pi$ is a block-wise bijection.
\end{lemma}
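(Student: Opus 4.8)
The plan is to combine the weak (Frieze--Kannan) regularity lemma with a sampling--concentration argument, the crucial point being that a \emph{single} regularity partition is shared by all $T$ matrices so that one bijection $\pi$ can serve every $t$ simultaneously. First I would apply the weak regularity lemma to the whole system $\calA^1,\ldots,\calA^T$ at once, obtaining a common partition $\mathcal{P}=\{V_1,\ldots,V_q\}$ of $[0,1]$ into $q \le 2^{O(T/\epsilon^2)}$ classes, each a union of the fine blocks $I_k^n$ (so that $\mathcal{P}$ respects the matrix structure), such that for every $t$ the block-averaged step function $(\calA^t)_{\mathcal{P}}$ satisfies $\|\calA^t-(\calA^t)_{\mathcal{P}}\|_\square \le \epsilon L$. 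I would then tune $\epsilon=\Theta(\sqrt{T/\log_2 s})$, which makes $q = 2^{O(T/\epsilon^2)} = s^{O(1)}$; by keeping the hidden constant small, $q$ is only a small polynomial in $s$ (say $q\le s^{1/4}$). This choice is exactly what produces the target error $O(L\sqrt{T/\log_2 s})$.

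Next I would draw $S$ uniformly and control the sample. For each class the count $|S\cap V_j|$ concentrates around $s\,\mu(V_j)$, and each super-block density of $\calA^t_{SS}$ over $V_j\times V_{j'}$ concentrates around the density $d^t_{jj'}$ of $(\calA^t)_{\mathcal{P}}$; both follow from Hoeffding/Azuma-type bounds, each failing only with probability $\exp(-\Omega(s\epsilon^2))$. I would define $\pi$ so that the sampled fine blocks lying in $V_j$ are mapped onto the region originally occupied by $V_j$, filling the small measure gap using the count concentration above. Since $\pi$ only shuffles whole fine blocks it is block-wise and measure-preserving, and since $\mathcal{P}$ is common to all $t$ the \emph{same} $\pi$ aligns every $\calA^t_{SS}$ with $\calA^t$.

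With $\pi$ in hand, the bound $\|\calA^t-\pi(\calA^t_{SS})\|_\square$ is assembled by the triangle inequality from three pieces: the regularity error $\epsilon L$ for $\calA^t$ (deterministic), the analogous regularity error for the sampled graphon $\calA^t_{SS}$ against the induced partition, and the cut-norm gap between the two $\mathcal{P}$-step functions. For the third piece, bilinearity of the cut norm implies that the maximizing test sets include each class of $\mathcal{P}$ fully or not at all, so the gap is a sum over $V_j\times V_{j'}$ whose size is governed entirely by the density and measure discrepancies from the previous paragraph. All three pieces are $O(\epsilon L)=O(L\sqrt{T/\log_2 s})$.

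The hard part is the probabilistic bookkeeping. The cut-norm gap between step functions must be controlled uniformly over the $2^{O(q)}$ partition-respecting test sets and over all $t\in[T]$, so a union bound incurs a factor $T\,2^{O(q)}=\exp(O(q))$ multiplying the per-event probability $\exp(-\Omega(s\epsilon^2))$. To recover the stated failure probability $\exp(-\Omega(sT/\log_2 s))=\exp(-\Omega(s\epsilon^2))$ I must guarantee $q=s^{O(1)}\ll s\epsilon^2$, which is precisely what the small-exponent choice of $\epsilon$ in the first paragraph secures. The other delicate ingredient, which I would treat as a ``second sampling lemma,'' is showing that the restriction of the good partition $\mathcal{P}$ to the random sample remains regularity-good for every $\calA^t_{SS}$ with high probability; establishing this and verifying that a single $\pi$ survives all $T$ matrices and all these test sets at once is where the real work lies.
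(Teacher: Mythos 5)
First, a point of reference: the paper does not prove this lemma at all --- it is imported verbatim from \cite{Hayashi:2016wh}, which in turn adapts the first sampling lemma of Borgs--Chayes--Lov\'asz--S\'os--Vesztergombi and the Frieze--Kannan weak regularity lemma to several matrices simultaneously. Measured against that reference's proof, your outline has the same architecture and the parameter bookkeeping is right: one weak-regularity partition $\mathcal{P}$ common to all $T$ matrices, the tuning $\epsilon = \Theta(\sqrt{T/\log_2 s})$ so that $q = 2^{O(T/\epsilon^2)}$ is a small power of $s$, concentration of class sizes and class-pair densities, a block-wise $\pi$ aligning sampled classes with the originals, and a failure budget $\exp(-\Omega(s\epsilon^2)) = \exp(-\Omega(sT/\log_2 s))$ that absorbs a union bound over $\exp(O(q))$ partition-respecting events.

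The genuine gap is the second of your three triangle-inequality pieces, which you flag as ``where the real work lies'' but never supply: controlling $\|U^t_{SS}\|_\square$ where $U^t := \calA^t - (\calA^t)_{\mathcal{P}}$, i.e., showing that the regularity error survives restriction to the random sample. Your Hoeffding-plus-union-bound scheme cannot reach this term. That scheme only covers test sets that are unions of partition classes, which is adequate for your third piece (there the integrand is a step function, and bilinearity puts the maximizer at unions of classes), but the cut norm of $U^t_{SS}$ is a supremum over \emph{all} measurable test sets --- equivalently over all $2^s \times 2^s$ pairs of subsets of $S$, whose maximizers may split every class. A union bound over these $2^{O(s)}$ events overwhelms the per-event probability $\exp(-\Omega(sT/\log_2 s))$, and no fixed-set concentration statement helps, because restriction simply does not preserve small cut norm: a random $\pm L$ matrix has graphon cut norm $O(Ln^{-1/2})$, while its $s \times s$ principal submatrices have cut norm of order $Ls^{-1/2} \gg Ln^{-1/2}$. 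Closing this requires a separate ingredient --- the ``second sampling lemma'' of Alon--Fernandez de la Vega--Kannan--Karpinski (used in the BCLSV proof), namely $\E\|U_{SS}\|_\square \leq \|U\|_\square + O(L s^{-1/4})$, whose proof exploits that near-optimal cut sets for the sample can be generated from the sampled rows themselves so that a first-moment/symmetrization argument applies, followed by an Azuma-type bound (changing one sampled index moves $\|U_{SS}\|_\square$ by $O(L/s)$) to convert the expectation bound into the stated high-probability bound. With that lemma inserted, your outline assembles into the cited result; without it, the middle term of your triangle inequality is unsupported and the proof is incomplete.
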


The following lemma states that the quadratic form of a graphon with a small cut norm is small.
\begin{lemma}[\cite{Hayashi:2016wh}]\label{lem:Hayashi-approximation}
  Let $\epsilon > 0$ and $\calA\colon {[0,1]}^2 \to \bbR$ be a graphon with $\|\calA\|_\square\leq \epsilon$.
  Then, for any functions $f,g\colon [0,1] \to [-L,L]$, we have $|\langle f, \calA g\rangle| \leq \epsilon L^2$.
\end{lemma}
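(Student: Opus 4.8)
The plan is to unfold the pairing $\langle f,\calA g\rangle$ into a double integral and then exploit that it is \emph{separately linear} in $f$ and in $g$; a linear functional over a box of bounded functions attains its extreme value at a two-valued function, and such functions are exactly the objects that the cut norm controls. Using the definitions of Section~\ref{sec:graphon},
\[
  \langle f,\calA g\rangle=\int_0^1 f(x)\,(\calA g)(x)\,\rmd x=\int_0^1\!\!\int_0^1 f(x)\,\calA(x,y)\,g(y)\,\rmd y\,\rmd x,
\]
and after dividing $f$ and $g$ by $L$ it suffices to bound this form $B(f,g)$ for $f,g$ valued in $[-1,1]$ by $\|\calA\|_\square\le\epsilon$.

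First I would fix $g$ and optimize over $f$. Since $f\mapsto B(f,g)=\int_0^1 f(x)\,\psi(x)\,\rmd x$ with $\psi=\calA g$ is linear, its extremum over the box is attained by matching the sign of $\psi$; in the clean nonnegative case this is the choice $f=\mathbf{1}_S$ with $S=\{x:\psi(x)>0\}$, giving $\int_S(\calA g)(x)\,\rmd x$. Repeating the argument in the second variable, I would write $\int_S(\calA g)(x)\,\rmd x=\int_0^1 g(y)\bigl(\int_S\calA(x,y)\,\rmd x\bigr)\rmd y$ via Fubini and optimize the resulting linear functional of $g$, landing on an indicator $g=\mathbf{1}_T$. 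The extremum then collapses to $\int_T\!\int_S\calA(x,y)\,\rmd x\,\rmd y$, which by the very definition of the cut norm is at most $\|\calA\|_\square\le\epsilon$ in absolute value; undoing the rescaling yields $\epsilon L^2$.

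The step I expect to be the main obstacle is the passage to the cut norm for \emph{signed} functions. For nonnegative $f,g$ the reduction above lands directly on a single quantity $\int_T\int_S\calA$ bounded by $\|\calA\|_\square$. For functions valued in $[-L,L]$ one has to split each variable according to the sign of the relevant partial integral, e.g.\ decompose $f=f^+-f^-$ and $g=g^+-g^-$ with $f^\pm,g^\pm\in[0,L]$, apply the nonnegative bound to each cross term $B(f^{\pm},g^{\pm})$, and recombine. Tracking how these sign patterns interact with the one-sided cut-norm estimate is the only delicate bookkeeping, and it is exactly where the constant in front of $\epsilon L^2$ is pinned down; this is the part of the argument I would write out most carefully.
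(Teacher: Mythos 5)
The paper never proves this lemma---it is imported from \cite{Hayashi:2016wh} as a black box---so your proposal has to stand on its own merits. Its architecture (Fubini, bilinearity, extreme points of the box of bounded functions, reduction to indicators, then the definition of the cut norm) is the standard argument, and for $f,g$ valued in $[0,L]$ it goes through exactly as you sketch: rescale, note that the sup of the bilinear form over $[0,1]$-valued functions is attained at indicator pairs, and conclude the bound $\epsilon L^2$.

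The problem is the step you postpone as ``delicate bookkeeping'': it is not bookkeeping, it is where the plan fails. The extreme points of the box $\set{f : \|f\|_\infty \le 1}$ are the $\set{\pm 1}$-valued functions, and these are \emph{not} ``exactly the objects that the cut norm controls''---the cut norm is a supremum over pairs of $\set{0,1}$-valued functions. Writing a $\set{\pm1}$-valued $f$ as $\mathbf{1}_S - \mathbf{1}_{S^c}$ costs a factor of $2$ in each variable, so your four-term decomposition $B(f,g)=B(f^+,g^+)-B(f^+,g^-)-B(f^-,g^+)+B(f^-,g^-)$ yields $|\langle f,\calA g\rangle| \le 4\epsilon L^2$, and no recombination can do better, because the constant-$1$ statement is false. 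Take $\calA$ equal to $+1$ on $[0,\tfrac12]^2\cup(\tfrac12,1]^2$ and $-1$ elsewhere, and $f=g=\mathbf{1}_{[0,1/2]}-\mathbf{1}_{(1/2,1]}$, so $L=1$. For measurable $S,T$, writing $s_1=|S\cap[0,\tfrac12]|$, $s_2=|S\cap(\tfrac12,1]|$ and $t_1,t_2$ likewise, one computes $\int_{S\times T}\calA=(s_1-s_2)(t_1-t_2)$, hence $\|\calA\|_\square=\tfrac14$; yet $\langle f,\calA g\rangle=1=4\|\calA\|_\square L^2$. So the factor $4$ is tight even for symmetric graphons and $f=g$. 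The honest conclusion of your argument is $|\langle f,\calA g\rangle|\le 4\epsilon L^2$, which is the form the lemma should take and is all this paper ever needs (every downstream application absorbs constants into $O(\epsilon L^2 R^2)$-type bounds); the inequality with constant $1$ as printed cannot be established by your route or any other.
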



\section{Gaussian Process Regression with Graphons}\label{sec:algo}

The main purpose of GPR is to predict a function value at a new data point. The standard statistical result shows that, in a point-wise sense, the predictive mean converges to the true function as the sample size increases under some regularity conditions. In other words, the true function can be rephrased as the limit of the predictive mean of the GPR with infinitely many samples. The prediction accuracy (i.e., the generalization error) is therefore measured by the distance between the finite- and infinite-sample GPRs. However, analyzing the infinite-sample GPR is not trivial because we cannot write down the solution using standard matrix operations such as matrix inverse because the kernel matrix is infinitely large.

Graphons are a generic tool to handle both finite- and infinite-size matrices. First, a kernel matrix with infinitely many samples is embedded into a graphon by taking a map from the sample indices $[n]$ to $[0,1]$. We can then reformulate the predictive distribution as the minimization problem of the quadratic objective function (i.e., the Gaussian log-likelihood of~\eqref{def:pred}) associated with the graphon. Also, a kernel matrix with a finite sample size is embedded into a graphon using the partition $I^n_1,\dots,I^n_n$ defined in Section~\ref{sec:graphon}, which can be seen as the low-resolution version of the infinite one. Now, we can bound the difference between the finite- and infinite-sample objective values by using the distance between the two graphons in terms of the cut norm (using Lemma~\ref{lem:Hayashi-approximation}). The predictive accuracy is also derived in the same manner. We remark that the above approach can be used to analyze the difference between GPRs with different (finite) sample sizes, from which we can derive the accuracy of subsampling.

Using graphons and RKHSs to kernel methods have similar spirits: The kernel trick based on RKHSs provides an explicit form of the regression function when using the infinite-dimensional feature space whereas graphons provide an explicit form of that when using infinitely many samples.


\subsection{Subsampled Predictive Distribution}
First, we rephrase the predictive mean and variance.
For a parameter $\lambda > 0$, we define a normalized loss function.
\begin{align}
    \ell_{K,\bmk,\lambda}(\bmv) = \frac{1}{n}\Bigl\| \frac{K \bmv}{n} - \bmk \Bigr\|_2^2 + \frac{\lambda}{n^2} \langle \bmv, K \bmv\rangle \label{eq:normalized-loss}
\end{align}
By setting $\lambda = \nu^2/n$ and with the solution
\begin{align}
\bmv^*
= \argmin_{\bmv \in \bbR^n} \ell_{K,\bmk,\lambda}(\bmv)
= n{(K + n\lambda I)}^{-1}\bmk, \label{eq:the-problem}
\end{align}
the predictive mean and variance in~\eqref{def:pred} can be rewritten as
\begin{align}
  \mu_{\bmx^*} & := \frac{\langle \bmv^*, \bmy\rangle}{n} ~~ \text{and} ~~
  \sigma_{\bmx^*}^2 := k(\bmx^*,\bmx^*) - \frac{\langle \bmv^*, \bmk\rangle}{n}.
  \label{eq:mean-and-variance}
\end{align}
In what follows, we leave $\lambda$ as a parameter as we often do not know the value of $\nu$.

Our algorithm consists of two parts.
The first part of our algorithm (Algorithm~\ref{alg:loss}) approximately minimizes~\eqref{eq:normalized-loss}.
For a small integer $s \in \bbN$, it samples a set $S \subseteq [n]$ of size $s$ uniformly at random and then minimizes the function obtained by restricting~\eqref{eq:normalized-loss} to $S$, that is, $\ell_{K_{SS},\bmk_S,\lambda}$.
Here, we assume that the matrix $K$ and vector $\bmk$ are given through query accesses.
That is, if we specify the indices $i,j \in [n] $, we can obtain $K_{ij}$ in constant time, and similarly, if we specify an index $i \in [n] $, we can obtain $k_i$ in constant time.
The second part of our algorithm (Algorithm~\ref{alg:prediction}) computes approximations for $\mu_{\bmx^*}$ and $\sigma^2_{\bmx^*}$ using the vector obtained in the first part.

\begin{algorithm}[t!]
  \caption{Approximate solver for the normalized loss}\label{alg:loss}
  \begin{algorithmic}[1]
    \Require{$n,s \in \bbN$, $\lambda > 0$, and query accesses to $K \in \bbR^{n \times n}$ and $\bmk \in \bbR^n$.}
    \State{Sample a set $S \subseteq [n]$ of size $s$ chosen uniformly at random.}
    \State{$\tilde{\bmv}^* \leftarrow \argmin_{\tilde{\bmv}} \ell_{K_{SS},\bmk_S,\lambda}(\tilde{\bmv})$.}
    \State{\Return $\tilde{\bmv}^*$ and $S$.}
  \end{algorithmic}
\end{algorithm}

\begin{algorithm}[t]
  \caption{Approximation algorithm for predictive mean and variance}\label{alg:prediction}
  \begin{algorithmic}[1]
    \Require{$n,s \in \bbN$, $\lambda > 0$, and query accesses to $K \in \bbR^{n \times n}$, $\bmk \in \bbR^n$, and $\bmy \in \bbR^n$.}
    \State{Run Algorithm~\ref{alg:loss} to obtain $\tilde{\bmv} \in \bbR^s$ and a subset $S \subseteq [n]$ of size $s$.}
    \State{$\tilde{\mu}_{\bmx^*} \leftarrow \langle \tilde{\bmv},\bmy_S\rangle /s$.}
    \State{$\tilde{\sigma}_{\bmx^*}^2 \leftarrow k(\bmx^*,\bmx^*) - \langle \tilde{\bmv},\bmk_S\rangle /s$.}
    \State{\Return $(\tilde{\mu}_{\bmx^*}, \tilde{\sigma}_{\bmx^*}^2)$.}
  \end{algorithmic}
\end{algorithm}

For the first part of our algorithm, we show the following guarantee, which states that the minima of $\ell_{K,\bmk,\lambda}$ and $\ell_{K_{SS},\bmk_S,\lambda}$ are close. The proof for this is presented in Section~\ref{sec:loss}.
\begin{theorem}\label{the:loss-linfty}
  For any $\epsilon > 0$, Algorithm~\ref{alg:loss} with $s = 2^{\Theta(1/\epsilon^2)}$ outputs $\tilde{\bmv}^* \in \bbR^s$ such that
  \begin{align*}
    \ell_{K_{SS},\bmk_S,\lambda}(\tilde{\bmv}^*)
    = \ell_{K,\bmk,\lambda}(\bmv^*) \pm O\Bigl(\epsilon L^2 R^2 \Bigr)
  \end{align*}
  with a probability of at least $0.99$, where $\bmv^* = \argmin_{\bmv} \ell_{K,\bmk,\lambda}(\bmv)$, $L = \max\set{\|K\|_{\max},\|\bmk\|_\infty}$, and $R = \max\set{\|\bmv^*\|_\infty,\|\tilde{\bmv}^*\|_\infty}$.
\end{theorem}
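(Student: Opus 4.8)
The plan is to recast the finite-dimensional objective~\eqref{eq:normalized-loss} as a \emph{graphon loss} acting on step-function representations, and then transfer the cut-norm guarantees of Lemmas~\ref{lem:Hayashi} and~\ref{lem:Hayashi-approximation} to the minimum loss values. First I would rewrite the objective in graphon form. Using $\langle \calv,\calw\rangle = \frac1n\langle\bmv,\bmw\rangle$ and the identity that $\calK\calv$ corresponds to the vector $\frac1n K\bmv$, a direct computation turns~\eqref{eq:normalized-loss} into
\[
\ell_{K,\bmk,\lambda}(\bmv) = \|\calK\calv - \calk\|_2^2 + \lambda\langle\calv,\calK\calv\rangle =: \mathcal{L}_{\calK,\calk,\lambda}(\calv),
\]
and likewise $\ell_{K_{SS},\bmk_S,\lambda}(\tilde\bmv)=\mathcal{L}_{\calK_{SS},\calk_S,\lambda}(\tilde\calv)$. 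Because $\calK$ is constant on the blocks $I^n_i\times I^n_j$, the operator $\calK$ only sees the block-averages of its argument, so replacing any $\calv$ by its block-average leaves $\mathcal{L}_{\calK,\calk,\lambda}$ unchanged. Hence the minimum of $\mathcal{L}_{\calK,\calk,\lambda}$ over all of $L^2[0,1]$ equals $\ell_{K,\bmk,\lambda}(\bmv^*)$, and similarly $\min_{L^2}\mathcal{L}_{\calK_{SS},\calk_S,\lambda}=\ell_{K_{SS},\bmk_S,\lambda}(\tilde\bmv^*)$.

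Next I would invoke Lemma~\ref{lem:Hayashi} with $T=2$ on the two matrices $A^1=K$ and $A^2=\bmk\bmone^\top$ (both in $[-L,L]^{n\times n}$), which are restricted simultaneously by the same random $S$. Choosing the constant in $s=2^{\Theta(1/\epsilon^2)}$ large enough makes the failure probability $\exp(-\Omega(sT/\log_2 s))$ at most $0.01$ and produces a single measure-preserving bijection $\pi$ with $\tilde\calK:=\pi(\calK_{SS})$ and $\tilde\calk:=\pi(\calk_S)$ satisfying $\|\calK-\tilde\calK\|_\square=O(L\epsilon)$ and $\|(\calk-\tilde\calk)\bmone^\top\|_\square=O(L\epsilon)$, using $1/\sqrt{\log_2 s}=\Theta(\epsilon)$. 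Since $\pi$ is measure preserving, transporting $\tilde\bmv^*$ through $\pi$ gives a function $\calw^*$ with $\|\calw^*\|_\infty\le R$ and $\mathcal{L}_{\tilde\calK,\tilde\calk,\lambda}(\calw^*)=\ell_{K_{SS},\bmk_S,\lambda}(\tilde\bmv^*)$, and the block-averaging argument again shows $\calw^*$ is a global $L^2$ minimizer of $\mathcal{L}_{\tilde\calK,\tilde\calk,\lambda}$.

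The core step is a uniform closeness claim: for every $\calv\colon[0,1]\to[-R,R]$,
\[
\bigl|\mathcal{L}_{\calK,\calk,\lambda}(\calv)-\mathcal{L}_{\tilde\calK,\tilde\calk,\lambda}(\calv)\bigr| = O(\epsilon L^2 R^2).
\]
I would prove this by expanding $\|\calK\calv-\calk\|_2^2=\langle\calK\calv,\calK\calv\rangle-2\langle\calK\calv,\calk\rangle+\langle\calk,\calk\rangle$ and comparing the four terms (plus the $\lambda$-term) one at a time. Each difference is a bilinear form $\langle f,(\calK-\tilde\calK)g\rangle$ or, after writing $\calk-\tilde\calk=\bigl((\calk-\tilde\calk)\bmone^\top\bigr)\bmone$ with $\bmone$ the constant-one function, a form against the vector graphon. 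Applying Lemma~\ref{lem:Hayashi-approximation} in its sharp asymmetric form $|\langle f,\calA g\rangle|\le\|\calA\|_\square\|f\|_\infty\|g\|_\infty$, and noting $\|\calK\calv\|_\infty,\|\tilde\calK\calv\|_\infty\le LR$ while $\|\calv\|_\infty\le R$ and $\|\calk\|_\infty\le L$, yields a bound of order $O(L\epsilon)\cdot LR\cdot R=O(\epsilon L^2R^2)$ for each term; the $\lambda$-term gives $O(\lambda\epsilon L R^2)$, absorbed into the stated bound as the theorem treats $\lambda$ as a constant. The theorem then follows from a two-sided sandwich: $\min_{L^2}\mathcal{L}_{\tilde\calK,\tilde\calk,\lambda}\le\mathcal{L}_{\tilde\calK,\tilde\calk,\lambda}(\calv^*)\le\mathcal{L}_{\calK,\calk,\lambda}(\calv^*)+\delta$ and, symmetrically, $\min_{L^2}\mathcal{L}_{\calK,\calk,\lambda}\le\mathcal{L}_{\calK,\calk,\lambda}(\calw^*)\le\mathcal{L}_{\tilde\calK,\tilde\calk,\lambda}(\calw^*)+\delta$, where $\delta=O(\epsilon L^2R^2)$ and both $\calv^*,\calw^*$ are bounded by $R$ by the definition of $R$.

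The main obstacle I expect is the bookkeeping in the closeness claim: encoding the vector $\bmk$ as the matrix $\bmk\bmone^\top$ so that a single $\pi$ controls both $\calK$ and $\calk$, and then keeping the sup-norms tight enough — in particular exploiting $\|\calK\calv\|_\infty\le LR$ rather than the crude $R$ — so the error comes out as $\epsilon L^2R^2$ and not a higher power of $L$. A secondary point to handle carefully is justifying that the $L^2$-minima coincide with the finite-dimensional minima (block-averaging invariance) and that the measure-preserving $\pi$ transports a minimizer of one problem to a legitimate competitor for the other, so that the two-sided comparison across the two different-dimensional problems is valid.
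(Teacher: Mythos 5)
Your proposal is correct and follows essentially the same route as the paper's proof: embed into graphons, apply Lemma~\ref{lem:Hayashi} with $T=2$ to $K$ and $\bmk\bmone^\top$ to get a single bijection $\pi$, bound the loss difference term by term via Lemma~\ref{lem:Hayashi-approximation} (with the same $\calk-\pi(\calk_S)=((\calk-\pi(\calk_S))\bmone^\top)\bmone$ trick), and conclude that the two minima are within $O(\epsilon L^2R^2)$. The only cosmetic differences are that you justify the equivalence of the matrix and graphon minima by block-averaging invariance and run an explicit two-sided sandwich at $\calv^*$ and $\pi(\tilde{\calv}^*)$, whereas the paper proves the same equivalence by a convexity/symmetrization argument (Lemma~\ref{lem:equivalence-linfty}) and carries the error through a chain of minima restricted to the sup-ball of radius $R$ --- both are valid.
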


For the second part of our algorithm, we show the following guarantee, which states that the approximations computed using Algorithm~\ref{alg:prediction} are accurate.
The proof for this is presented in Section~\ref{sec:prediction}.
\begin{theorem}\label{the:prediction-linfty}
  Let $\|\bmz\|_\phi:=\inf\{\|\bmalpha\|_\calH : \bmalpha\in\calH, \forall_i z_i=\langle \phi_{\bmx_i},\bmalpha\rangle_\calH\}$ be the norm of $\bmz\in\mathbb{R}^n$ in the feature space spanned by $\{\phi_{\bmx_i}\}_{i\in[n]}$.
  For any $\epsilon > 0$, Algorithm~\ref{alg:prediction} with $s = 2^{\Theta(1/\epsilon^2)}$ and $\lambda=\Theta(1)$ outputs $(\tilde{\mu}_{\bmx^*},\tilde{\sigma}_{\bmx^*}^2)$ such that
  \[
    |\mu_{\bmx^*} - \tilde{\mu}_{\bmx^*}| = O\left( \sqrt{\epsilon} L^2R \right)
    \text{ and }
    |\sigma_{\bmx^*}^2 - \tilde{\sigma}_{\bmx^*}^2| = O\left(\sqrt{\epsilon} L^2R \right),
  \]
  with probability of at least $0.99$,
  where $L = \max\set{\|K\|_{\max},\|\bmk\|_\infty,k(\bmx^*, \bmx^*), \|\bmy\|_\phi}$ and $R = \max\set{\|\bmv^*\|_\infty,\|\tilde{\bmv}^*\|_\infty}$.
\end{theorem}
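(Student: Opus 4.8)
The plan is to reduce both error bounds to a single functional-comparison problem and then upgrade the loss-value guarantee of Theorem~\ref{the:loss-linfty} into control of the minimizers themselves via strong convexity. First I would pass everything through the graphon embedding: writing $\calv^*,\caly,\calk$ for the functions associated with $\bmv^*,\bmy,\bmk$ and using the identity $\frac1n\langle\bmv,\bmz\rangle=\langle\calv,\calz\rangle$, the formulas in~\eqref{eq:mean-and-variance} become $\mu_{\bmx^*}=\langle\calv^*,\caly\rangle$ and $\sigma^2_{\bmx^*}=k(\bmx^*,\bmx^*)-\langle\calv^*,\calk\rangle$, with the analogous expressions for $\tilde\mu_{\bmx^*},\tilde\sigma^2_{\bmx^*}$ in terms of $\tilde\calv^*$, $\caly_S$, $\calk_S$. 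Since $k(\bmx^*,\bmx^*)$ cancels in the variance, both statements reduce to bounding $|\langle\calv^*,\calz\rangle-\langle\tilde\calv^*,\calz_S\rangle|$ for $\calz\in\{\caly,\calk\}$.

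The obstacle in attacking this directly is that $\calz$ enters only linearly, whereas the graphon machinery of Lemma~\ref{lem:Hayashi-approximation} controls quadratic forms $\langle f,\calA g\rangle$. I would remove it using the RKHS representation. Since $\|\bmy\|_\phi\le L$ and $\|\phi_{\bmx^*}\|_\calH^2=k(\bmx^*,\bmx^*)\le L$, each $\bmz\in\{\bmy,\bmk\}$ equals $(\langle\phi_{\bmx_i},\beta\rangle_\calH)_i$ for some $\beta\in\calH$ with $\|\beta\|_\calH=O(L)$; projecting $\beta$ onto $\mathrm{span}\{\phi_{\bmx_i}\}$ shows $\bmz=K\bmw$ with $\langle\bmw,K\bmw\rangle=\|\Phi(\bmw)\|_\calH^2=O(L^2)$. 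In graphon form this reads $\calz=\calK\mathcal{w}$ with $\|\mathcal{w}\|_\calK:=\langle\mathcal{w},\calK\mathcal{w}\rangle^{1/2}=O(L)$, so that $\mu_{\bmx^*}=\langle\calv^*,\calK\mathcal{w}\rangle$ is now a genuine quadratic form.

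Next I would transport the subsampled solution to the full domain. Lemma~\ref{lem:Hayashi}, applied to $K$ together with the $O(1)$ auxiliary matrices needed to realize $\caly$ and $\calk$ as quadratic forms, supplies a block-wise measure-preserving $\pi$ under which every relevant cut norm is $O(L\sqrt{T/\log_2 s})=O(\epsilon L)$ for $s=2^{\Theta(1/\epsilon^2)}$ and $T=O(1)$. Set $\mathcal{u}=\pi(\tilde\calv^*)$, an $n$-step function. Chaining the cut-norm closeness of $\pi(\calK_{SS}),\pi(\calk_S)$ to $\calK,\calk$ (via Lemma~\ref{lem:Hayashi-approximation}, as in the proof of Theorem~\ref{the:loss-linfty}) with Theorem~\ref{the:loss-linfty} itself gives $0\le\ell(\mathcal{u})-\ell(\calv^*)=O(\epsilon L^2R^2)$ for the continuous loss $\ell(\calv):=\|\calK\calv-\calk\|_2^2+\lambda\langle\calv,\calK\calv\rangle$. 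The crux is converting this loss gap into a functional gap. Because $\bmv^*=n(K+n\lambda I)^{-1}\bmk$ satisfies $\calK\calv^*+\lambda\calv^*-\calk=0$ exactly, the gradient $\nabla\ell(\calv^*)=2\calK(\calK\calv^*+\lambda\calv^*-\calk)$ vanishes, so the expansion around $\calv^*$ is the exact identity $\ell(\mathcal{u})-\ell(\calv^*)=\|\calK\mathcal{h}\|_2^2+\lambda\|\mathcal{h}\|_\calK^2$ with $\mathcal{h}:=\mathcal{u}-\calv^*$. Hence $\lambda\|\mathcal{h}\|_\calK^2=O(\epsilon L^2R^2)$, i.e.\ $\|\mathcal{h}\|_\calK=O(\sqrt\epsilon\,LR)$ once $\lambda=\Theta(1)$. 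This strong-convexity step in the $\calK$-seminorm supplied by the regularizer is the heart of the argument: it is what produces the $\sqrt\epsilon$ of the conclusion from the $\epsilon$ of Theorem~\ref{the:loss-linfty}, and it is exactly why the theorem must fix $\lambda=\Theta(1)$, since the modulus of strong convexity is $\lambda$.

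Finally I would assemble the bound by Cauchy--Schwarz in the $\calK$-seminorm, $|\langle\calv^*,\calz\rangle-\langle\mathcal{u},\calz\rangle|=|\langle\mathcal{h},\calK\mathcal{w}\rangle|\le\|\mathcal{h}\|_\calK\|\mathcal{w}\|_\calK=O(\sqrt\epsilon\,L^2R)$, using crucially that $\calz=\calK\mathcal{w}$ lies in the range of $\calK$, which is precisely where $\|\bmy\|_\phi$ and $k(\bmx^*,\bmx^*)$ enter $L$. The remaining piece $|\langle\mathcal{u},\calz\rangle-\langle\tilde\calv^*,\calz_S\rangle|$ compares the same quadratic form on the full graphon and its transported subsample, and is $O(\epsilon L^2)$ by Lemma~\ref{lem:Hayashi-approximation}; summing yields $|\mu_{\bmx^*}-\tilde\mu_{\bmx^*}|=O(\sqrt\epsilon\,L^2R)$ and, identically, the variance bound. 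I expect the genuinely hard part to be this strong-convexity transfer, closely followed by the bookkeeping that keeps every transported object inside $[-O(L),O(L)]$ and inside the range of $\calK$ so that Lemmas~\ref{lem:Hayashi} and~\ref{lem:Hayashi-approximation} apply uniformly; in particular, controlling that the subsample representer of $\caly_S$ behaves like the transported full representer is where the argument needs the most care.
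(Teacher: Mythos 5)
Your proposal is correct and follows essentially the same route as the paper's own proof: Lemma~\ref{lem:Hayashi} supplies the measure-preserving bijection $\pi$, the subsampled minimizer is transported to the full domain (the paper's Lemma~\ref{lem:subsampling-linfty-with-solution}), the resulting loss gap is converted into a $\sqrt{\epsilon}$-bound on the $\calK$-seminorm/RKHS norm of the difference of solutions (the paper's Lemma~\ref{lem:l2-norm-of-perturbation}, which is exactly your strong-convexity step since $\|\mathcal{h}\|_\calK = \|\Phi(\bmDelta)\|_\calH/n$), and the error is then split into a Cauchy--Schwarz term against the RKHS representer of $\bmy$ (resp.\ $\phi_{\bmx^*}$) plus a cut-norm remainder handled by Lemma~\ref{lem:Hayashi-approximation}. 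The only cosmetic difference is that you exploit stationarity of $\calv^*$ to make the second-order expansion exact, whereas the paper's Lemma~\ref{lem:l2-norm-of-perturbation} establishes the same bound for arbitrary base points by bounding cross terms and solving a quadratic inequality.
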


We expect that the above error rates are independent of $n$, i.e., $L=O(1)$ and $R=O(1)$.
For $L$, the condition $\|\bmy\|_\phi=O(1)$ is typically admissible in the noiseless case, which is the scenario we want to approximate the outputs of the exact GPR by subsampling.
For $R$, the fluctuation of $\bmv^*$ and $\tilde{\bmv}^*$ should be tamed by the $L_2$ regularization in which the regularization strength is $n\lambda$ for $\bmv^*$ and $s\lambda$ for $\tilde{\bmv}^*$ (see \eqref{eq:the-problem}).

\subsection{Generalization Error}

We provide generalization analysis for the subsampling method, namely, we investigate how our method estimates an unknown data generating process.
To this end, let us assume that the samples are generated through a function $f^*\colon \bbR^p \to \bbR$ that relates $y_i$ and $\bmx_i$ as
\begin{align}
	y_i = f^*(\bmx_i) + \xi_i, ~ i \in [n], \label{model:reg}
\end{align}
where $\xi_i\sim \mathscr{N}(0,\nu^2)$ is the Gaussian noise.

We note that Theorem~\ref{the:prediction-linfty} holds for any sample size $n$, even at the limit $n \to \infty$.
It is well known that universal kernel functions (e.g., the Gaussian kernel and the polynomial kernel) can approximate any continuous functions~\cite{micchelli2006universal,steinwart2008support}, and several kernel-based estimators converge to any truth functions $f^*$ at $n\to\infty$~\cite{gyorfi2006distribution,rasmussen2004gaussian}.
The result also holds with the GP regression estimator~\cite{van2008rates} with some assumptions.
Along with these results, Theorem~\ref{the:prediction-linfty} can be used to bound the generalization error.
\begin{corollary}\label{cor:gen}
    Consider the same setting as in Theorem~\ref{the:prediction-linfty} and assume that the observations follow the model \eqref{model:reg}.
    Suppose $\mu_{\bmx^*}$ is a consistent estimator for $f^*(\bmx^*)$, namely, $|\mu_{\bmx^*} - f^*(\bmx^*)|\to 0$ as $n \to \infty$.
    Then, with probability at least 0.98, the following holds:
    \begin{align*}
        &|\tilde{\mu}_{\bmx^*} - f^*(\bmx^*)|  = O\biggl(\frac{L^{'2} R}{\log^{1/4} n} \biggr), 
    \end{align*}
    where $L' = \max\set{\|K\|_{\max},\|\bmk\|_\infty,k(\bmx^*, \bmx^*), \|f^*\|_\calH}$.
    Furthermore, if $k(\bmx,\bmx)$ and $\mu_{\bmx}$ are bounded for all $\bmx$, then, with probability at least 0.98, the following holds:
    \begin{align*}
        &\|\tilde{\mu}_{\cdot} - f^*\|_{L^2}
        = O\biggl(\frac{L^{'2} R}{\log^{1/4} n} \biggr), 
    \end{align*}
    where $\|\cdot\|_{L^2}$ denotes the $L^2$-norm for square integrable functions.
\end{corollary}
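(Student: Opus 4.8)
The plan is to derive both displays from Theorem~\ref{the:prediction-linfty} together with the consistency hypothesis, via the triangle inequality
\[
  |\tilde{\mu}_{\bmx^*} - f^*(\bmx^*)| \le |\tilde{\mu}_{\bmx^*} - \mu_{\bmx^*}| + |\mu_{\bmx^*} - f^*(\bmx^*)|.
\]
The second term vanishes as $n\to\infty$ by consistency, so the rate is carried entirely by the first term. To avoid having to combine an unspecified consistency rate with the subsampling error, I would use the fact (emphasized in the text) that Theorem~\ref{the:prediction-linfty} is valid even at the limit $n\to\infty$: the infinite-sample predictive mean equals $\lim_n \mu_{\bmx^*}=f^*(\bmx^*)$, and the $s$-subsample estimator $\tilde{\mu}_{\bmx^*}$ is compared \emph{directly} against this limit. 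This isolates subsampling as the sole source of the bound and removes any dependence on how fast $\mu_{\bmx^*}\to f^*$.

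Next I would invert the relation $s=2^{\Theta(1/\epsilon^2)}$: it gives $\epsilon=\Theta(1/\sqrt{\log_2 s})$, hence $\sqrt{\epsilon}=\Theta(\log^{-1/4}s)$, turning the $O(\sqrt{\epsilon}L^2R)$ guarantee of Theorem~\ref{the:prediction-linfty} into $O(L^2R/\log^{1/4} s)$; with the (sub)sample size scaling so that $\log s=\Theta(\log n)$ one obtains the stated $\log^{-1/4}n$ factor. The remaining point is to replace $L$ by $L'$, i.e.\ to trade $\|\bmy\|_\phi$ for $\|f^*\|_\calH$. Under the model $y_i=f^*(\bmx_i)+\xi_i$ I would split $\bmy=\bmf^*+\bmxi$ with $\bmf^*=(f^*(\bmx_i))_i$; since $f^*$ interpolates $\bmf^*$ by the reproducing property, $\|\bmf^*\|_\phi\le\|f^*\|_\calH$, while the Gaussian part $\|\bmxi\|_\phi$ is controlled with probability at least $0.99$. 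A union bound with the $0.99$ guarantee of Theorem~\ref{the:prediction-linfty} yields the overall probability $0.98$.

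For the $L^2$ statement I would integrate the pointwise bound over $\bmx$. The added boundedness of $k(\bmx,\bmx)$ and $\mu_{\bmx}$ makes $L'$ and $R$ uniform in $\bmx$, so the pointwise estimate is uniform and integrates to the same order, while the consistency term becomes $\|\mu_\cdot-f^*\|_{L^2}\to0$ by dominated convergence (using boundedness to dominate the integrand). Thus the $L^2$ bound inherits exactly the rate of the pointwise one.

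The step I expect to be the main obstacle is the uniformity over test points required by the $L^2$ claim: Algorithm~\ref{alg:loss} draws a \emph{single} random set $S$, and this same $S$ must simultaneously supply the approximation underlying Theorem~\ref{the:prediction-linfty} for \emph{every} $\bmx^*$. This should be achievable because the cut-norm approximation produced by Lemma~\ref{lem:Hayashi} concerns the kernel matrix $K$ alone and is independent of $\bmx^*$, the $\bmx^*$-dependence entering only through the quadratic forms bounded by Lemma~\ref{lem:Hayashi-approximation}; making this uniformity rigorous, in tandem with the high-probability control of $\|\bmxi\|_\phi$, is where the genuine work lies.
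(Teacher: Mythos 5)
Your overall skeleton---triangle inequality plus consistency, inverting $s=2^{\Theta(1/\epsilon^2)}$ to get $\sqrt{\epsilon}=\Theta(\log^{-1/4}s)$, and splitting $\bmy=\bmf^*+\bmxi$ to trade $\|\bmy\|_\phi$ for $\|f^*\|_\calH$---matches the paper's proof. But there is a genuine gap in how you handle the noise: you propose to show that ``the Gaussian part $\|\bmxi\|_\phi$ is controlled with probability at least $0.99$,'' and this step fails. By definition, $\|\bmxi\|_\phi$ is the minimum RKHS norm of an interpolant of the noise values, i.e.\ $\|\bmxi\|_\phi^2=\bmxi^\top K^{-1}\bmxi$ when $K$ is invertible. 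For i.i.d.\ Gaussian noise this has expectation $\nu^2\,\mathrm{tr}(K^{-1})$, and since $\mathrm{tr}(K)\leq nL$, the AM--HM inequality already gives $\mathrm{tr}(K^{-1})\geq n/L$; for realistic kernels with decaying spectra it is far larger (often exponential in $n$). So $\|\bmxi\|_\phi$ grows at least like $\sqrt{n}$ and cannot be treated as an $O(1)$ quantity: routing the noise through the feature-space norm would make the resulting error term $O\bigl(\sqrt{\epsilon}\,\|\bmxi\|_\phi\, L R\bigr)$ blow up with $n$ rather than vanish.

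The paper's proof avoids this by never measuring the noise in $\|\cdot\|_\phi$. It uses the RKHS representation only for the signal part, writing $\langle \bmv-\bmv^*,\bmf^*\rangle = \langle \Phi(\bmv-\bmv^*), f^*\rangle_\calH$ (this is where $\|f^*\|_\calH$ enters $L'$), and keeps the noise contribution as the scalar Gaussian linear form $\frac{1}{n}\sum_{i\in[n]}\xi_i(v_i-v_i^*)$, whose standard deviation is at most $\nu\|\bmv-\bmv^*\|_2/n \leq 2\nu R/\sqrt{n}$ because $\|\bmv-\bmv^*\|_\infty\leq 2R$. A Gaussian tail bound then makes this term $O\bigl(\nu R\log^{1/2}(1/\delta)/\sqrt{n}\bigr)$ with probability $1-\delta$; taking $\delta=0.01$ and a union bound with the $0.99$ event of Lemma~\ref{lem:Hayashi} yields the stated $0.98$, and the $n^{-1/2}$ term is asymptotically dominated by $\sqrt{\epsilon}\,L'^2R=\Theta(L'^2R\log^{-1/4}n)$. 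To repair your argument you must replace the $\|\bmxi\|_\phi$ step by exactly this kind of direct concentration on the linear form. (Your concern about uniformity over $\bmx^*$ in the $L^2$ claim is legitimate but secondary; the paper itself only details the pointwise bound and integrates it under the stated boundedness assumptions.)
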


Although Corollary~\ref{cor:gen} only guarantees a relatively slow rate of $O(\log^{-1/4} n)$, besides the consistency assumption on $\mu_{\bmx^*}$, it does not require any other assumption such as the differentiability of $f^*$.


\section{Related Work}\label{subsec:comparison}

\if0

\begin{table*}[htbp]
     \centering
     \begin{tabular}{cccccc}
   \hline
  Method & Comp. Complex. & Error Bound & Variance Error Bound & Assumption\\
   \hline
  \nystrom  & $O(ns^2 + s^3)$ & $\rm{poly}(s)$ &  N/A &  ? \\
  SR & $O(ns^2)$ & N/A & N/A & ?\\
  LS & $O(ns^2 + s^3)$ & $\rm{poly}(s)$ & N/A &   Incoherence\\
  RLS & $O(ns^2)$ & $\rm{poly}(s)$ & N/A   & None\\
  Column Sampling & $O(ns^2)$ & $\rm{poly}(s)$ &  N/A  & Incoherence\\
  Sketching & $O(ns^2)$ & $\rm{poly}(s)$ & N/A   & Incoherence\\
  RFE & $O(ns^2)$ & $\rm{poly}(s)$ & N/A &  Shift invariant\\
   KISS\cite{Pleiss:2018vp}& $O(ns)$ & N/A & N/A  &  ? \\
  \textbf{Subsampling} (Ours)  & $O(s^3)$ & $O(1/\sqrt{\log s})$ & $O(1/\sqrt{\log s})$ & None\\
    \hline
 \end{tabular}
     \caption{Comparison of the methods for GPR.}\label{tab:comparison}
\end{table*}
 \fi

\begin{table*}[htbp]
  \centering
  \caption{Comparison of approximation methods for GPR.}\label{tab:comparison}
  \begin{tabular}{cccccc}\toprule
  Method & Time Complexity & Predictive Mean Error & Predictive Variance Error & Assumptions\\
   \midrule
  \nystrom  & $O(ns^2)$ or $O(ns^2 + s^3)$ & $\tilde{O}(s^{-\gamma})$ &  N/A & Incoherence \\
  RFE & $O(ns^2)$ & $\tilde{O}(s^{-1/2})$ &  N/A & Restriction on kernels \\
  Lanczos& $O(n+s)$ & N/A & N/A  &  None \\
  \textbf{Subsampling} & $O(s^3)$ & $O(\log^{-1/4} s)$ & $O(\log^{-1/4} s)$ & None\\
    \bottomrule
 \end{tabular}
\end{table*}

\subsection{Approximation Analysis}

Subsampling-based approximations are known as the subset of the data (SD) methods, which has several variants in terms of how the subsamples are chosen~\cite{quinonero2005unifying}. The simplest version chooses samples completely randomly, which is equivalent to our algorithms except that the simplest SD method fixes the noise variance $\nu^2$, independently of the subsample size $s$, whereas ours scales $\nu^2$ to derive a theoretical guarantee on its accuracy.
Other SD methods select subsamples based on more sophisticated criteria such as the differential entropy score~\cite{herbrich2003fast}, which requires, however, $O(n)$ time as it scans all the samples.

The inducing points methods~\cite{quinonero2005unifying,snelson2006sparse,titsias2009variational} are another class of approximation methods, which picks up a small number of auxiliary variables as pseudo-samples---inducing points---and approximate the predictive mean using the cross-covariance between the inducing points and the rest of the samples.
The inducing points are usually chosen based on the marginal likelihood~\cite{snelson2006sparse} or the variational principle~\cite{titsias2009variational}.
Although they perform well in practice~\cite{matthews2017scalable}, their time complexity depends on $n$ due to computing the cross-covariance.
Also, to the best of our knowledge, their theoretical properties, especially the approximation accuracy, have not been studied.

The \nystrom method and its variants such as the leverage score method are also intensively studied~\cite{alaoui2015fast,bauer2016understanding,gittens2016revisiting,musco2017recursive,williams2001using}.
They employ $s < n$ points as regressors and their time complexity is typically $O(ns^2)$.
Assuming that the selected regressors have a nice property such as incoherence, their approximation error for the predictive mean is $\tilde{O}(s^{-\gamma})$, which follows from the approximation guarantee for the kernel matrix in the spectral norm~\cite{musco2017recursive}.
Here, $\gamma > 0$ is a parameter depending on the kernel function.

RFE approximates predictors by using $s$ Fourier bases~\cite{avron2017random,sriperumbudur2015optimal,yang2012nystrom}, which requires $O(ns^2)$ time and some restriction on kernel functions such as shift-invariance.
The approximation error for the predictive mean is $\tilde{O}(s^{-1/2})$, which follows from the error analysis for the kernel matrix~\cite{yang2012nystrom}.
Also, some other works~\cite{yang2012nystrom,sriperumbudur2015optimal} analyzed its generalization capability.

Pleiss~et~al.~\cite{Pleiss:2018vp} developed Lanczos approximation.
The time complexity is $O(n+s)$, where $s$ is the number of inducing points.
No theoretical guarantee is known.

\textbf{Time complexity:}
Note that subsampling requires only $O(s^3)$ time, which is $O(1)$  when $s=O(1)$.
In contrast, all the other methods depend on $n$, and hence they cannot be run in $O(1)$ time.

\textbf{Error bound:}
As for the error bound, recalling the relation $s = 2^{\Theta(1/\epsilon^2)}$ in Theorem~\ref{the:prediction-linfty}, subsampling has a convergence rate of $O(\log^{-1/4} s)$, which is slower than the polynomial rates achieved by the \nystrom method and RFE\@.
However, we stress here that, at the cost of the slow convergence rate, we eliminated several assumptions used in their analysis.
More specifically, the \nystrom method requires that the subsampled regressors are incoherent and the RFE require that the kernel function is shift-invariant.
Also, we can provide an error bound for the predictive variance, which has not been addressed in the \nystrom, RFE, or Lanczos methods.

Table~\ref{tab:comparison} summarizes our theoretical results for the subsampling method against those for other approximation methods.

\subsection{Generalization Analysis}

Some existing studies have developed generalization theory of GPR\@.
Van~der~Vaart~et~al.~\cite{van2008rates,vaart2011information} evaluated GPR by using the notion of \textit{posterior contraction}, and showed that the generalization error measured by the $L^2$-norm is
\begin{align}
    O \left( n^{-2\beta / (2\beta + p)} \right), \label{ineq:gen_related}
\end{align}
where $\beta$ is the number of differentiability of $f^*$.
For different metrics such as the $L^\infty$-norm, the same rates (up to logarithmic factors) were obtained~\cite{gine2011rates,yoo2016supremum}.

Our generalization analysis (Corollary~\ref{cor:gen}) provides the rate of $O( \log^{-1/4}n )$, which is much slower than~\eqref{ineq:gen_related}.
Nevertheless, this rate only requires the consistency of $\mu_{\bmx^*}$ and does not impose any assumption on $f^*$, while the existing rate~\eqref{ineq:gen_related} assumes the differentiability of $f^*$.




\section{Minimizing the Normalized Loss}\label{sec:loss}

In this section, we prove Theorem~\ref{the:loss-linfty}.

To show that $\min_{\bmv}\ell_{K,\bmk,\lambda}(\bmv)$ and $\min_{\tilde{\bmv}}\ell_{K_{SS},\bmk_S,\lambda}(\tilde{\bmv})$ are close, we want to say that $K$ and $K_{SS}$ are close in some sense.
Here, we measure their distance by the cut norm of their corresponding graphons $\calK$ and $\calK_{SS}$ in order to exploit Lemma~\ref{lem:Hayashi}.
In the case of $\bmk$ and $\bmk_S$, we measure their distance by the cut norm of the graphons $\calk 1^\top$ and $\calk_S 1^\top$, where $1\colon[0,1]\to \bbR$ is a function with $1(x)=1$.

As we work on graphons, it is useful to define an analog of~\eqref{eq:normalized-loss} for graphons:
\begin{align*}
  \ell_{\calK,\calk,\lambda}(f) & =  \|\calK f- \calk\|_2^2 + \lambda \langle f, \calK f \rangle.
\end{align*}


We show that the minima of $\ell_{K,\bmk,\lambda}$ and $\ell_{K_{SS},\bmk,\lambda}$ are close if $\calK$ and $\calK_{SS}$ are close in the cut norm up to a measure-preserving bijection and so do $\calk 1^\top$ and $\calk_S 1^\top$.
\begin{lemma}\label{lem:subsampling-linfty}
  If a set $S \subseteq [n]$ satisfies
  \[
    \|\calK - \pi(\calK_{SS})\|_\square \leq \epsilon L \text{ and } \|\calk 1^\top - \pi(\calk_S)1^\top\|_\square \leq \epsilon L
  \]
  for some measure-preserving bijection $\pi\colon[0,1]\to[0,1]$, then we have
  \[
    \min_{\tilde{\bmv} \in \bbR^s} \ell_{K_{SS},\bmk_S,\lambda}(\tilde{\bmv})
    =
    \min_{\bmv \in \bbR^n} \ell_{K,\bmk,\lambda}(\bmv)   \pm O\Bigl(\epsilon L^2 R^2 \Bigr).
  \]
\end{lemma}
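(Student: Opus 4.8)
The plan is to lift everything to graphons, where the two minima can be compared directly through the cut norm. First I would verify the exact identity $\ell_{K,\bmk,\lambda}(\bmv)=\ell_{\calK,\calk,\lambda}(\calv)$ between the discrete loss and its graphon analog: since $(\calK\calv)(x)=(K\bmv)_{i_n(x)}/n$, the data-fidelity term $\|\calK\calv-\calk\|_2^2$ reproduces $\frac1n\|K\bmv/n-\bmk\|_2^2$ and $\langle\calv,\calK\calv\rangle$ reproduces $\frac1{n^2}\langle\bmv,K\bmv\rangle$, and likewise on the subsampled side. This reduces the claim to comparing the graphon minima $\min_f\ell_{\calK,\calk,\lambda}(f)$ and $\min_g\ell_{\calK_{SS},\calk_S,\lambda}(g)$.

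The next step is to recognize that minimizing the graphon loss over all measurable $f\colon[0,1]\to\bbR$ is equivalent to the discrete minimization. Because $\calK$ and $\calk$ are block-constant on the partition $I_1^n,\dots,I_n^n$, the value $\ell_{\calK,\calk,\lambda}(f)$ depends on $f$ only through its block integrals $\int_{I_j^n}f$; hence a block-constant representative (namely $\calv^*$) attains the minimum, giving $\min_f\ell_{\calK,\calk,\lambda}(f)=\min_{\bmv}\ell_{K,\bmk,\lambda}(\bmv)$, and analogously on the subsampled side. I would then use that $\pi$ is measure-preserving to rewrite $\min_g\ell_{\calK_{SS},\calk_S,\lambda}(g)=\min_h\ell_{\pi(\calK_{SS}),\pi(\calk_S),\lambda}(h)$, the two problems being identified by composing the argument function with $\pi$; after this relabelling both minimizations run over the same function space and their kernels differ by graphons of small cut norm, $\|\calK-\pi(\calK_{SS})\|_\square\le\epsilon L$ and $\|\calk1^\top-\pi(\calk_S)1^\top\|_\square\le\epsilon L$.

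The core estimate is a stability bound: for every $f$ with $\|f\|_\infty\le R$,
\[
\bigl|\ell_{\pi(\calK_{SS}),\pi(\calk_S),\lambda}(f)-\ell_{\calK,\calk,\lambda}(f)\bigr|=O(\epsilon L^2R^2).
\]
Writing $\calK'=\pi(\calK_{SS})$, $\calk'=\pi(\calk_S)$, $\Delta_\calK=\calK'-\calK$ and $\Delta_\calk=\calk'-\calk$, I would expand the difference of the losses and bound each resulting bilinear form by the cut norm. Every term has the shape $\langle u,\Delta_\calK w\rangle$ or $\langle u,(\Delta_\calk1^\top)1\rangle$ with $u,w$ drawn from $\{f,\calK f,\calk,\calk'\}$; using that $\calK f,\calK'f$ are bounded by $LR$ and $\calk,\calk'$ by $L$, the bilinear form of Lemma~\ref{lem:Hayashi-approximation} (i.e.\ $|\langle u,\calA w\rangle|=O(\|\calA\|_\square\|u\|_\infty\|w\|_\infty)$) controls each term by $O(\epsilon L^2R^2)$. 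Evaluating this stability bound at the two minimizers $f^*=\calv^*$ and $h^*$ (both of sup-norm at most $R$, since $R=\max\{\|\bmv^*\|_\infty,\|\tilde{\bmv}^*\|_\infty\}$) and combining the two one-sided comparisons yields the lemma.

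The main obstacle I anticipate is the data-fidelity term $\|\calK f-\calk\|_2^2$: once expanded it contains $\|\calK f\|_2^2$, in which $\calK$ effectively appears twice, so the naive difference $\langle f,(\calK^2-\calK'^2)f\rangle$ cannot be bounded by $\|\calK-\calK'\|_\square$ alone. The fix is to keep it in factored form $\langle(\calK+\calK')f,\Delta_\calK f\rangle$ and exploit that $(\calK+\calK')f$ is a genuinely bounded function (by $2LR$), which is exactly what produces the $L^2R^2$ scaling rather than a spurious $L^3R^2$. A secondary point to track is the regularization term $\lambda\langle f,\Delta_\calK f\rangle=O(\lambda\epsilon LR^2)$, which is absorbed into $O(\epsilon L^2R^2)$ in the regime $\lambda=O(1)$ used in the subsequent theorems.
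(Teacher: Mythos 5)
Your proposal is correct and takes essentially the same route as the paper's proof: an exact discrete--graphon identity for block-constant functions, equivalence of the discrete and graphon minimizations over a sup-norm ball of radius $R$, relabelling by the measure-preserving bijection $\pi$, and a term-by-term cut-norm bound via Lemma~\ref{lem:Hayashi-approximation} on the expanded loss difference. The only cosmetic differences are that you justify the reduction to block-constant minimizers by noting the loss depends only on block integrals (where the paper uses convexity and averaging), and you keep the quadratic difference in the factored form $\langle(\calK+\pi(\calK_{SS}))f,(\pi(\calK_{SS})-\calK)f\rangle$ where the paper expands and bounds the cross terms separately; both yield the same $O(\epsilon L^2R^2)$ error.
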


\begin{proof}[Proof of Theorem~\ref{the:loss-linfty}]
  By applying Lemma~\ref{lem:Hayashi} on $K$ and $\bmk \bmone^\top$, we obtain
  \[
    \|\calK - \pi(\calK_{SS})\|_\square \leq \epsilon L \text{ and } \|(\calk - \pi(\calk_S))1^\top\|_\square \leq \epsilon L
  \]
  for a measure-preserving bijection $\pi\colon[0,1]\to[0,1]$   with probability at least 0.99.
  Then, the theorem follows by Lemma~\ref{lem:subsampling-linfty}.
\end{proof}


\section{Prediction}\label{sec:prediction}

In this section, we prove Theorem~\ref{the:prediction-linfty}.
The following lemma is a modification of Lemma~\ref{lem:subsampling-linfty} for relating the solution of $\ell_{K_{SS},\bmk_S,\lambda}$ and that of $\ell_{K,\bmk,\lambda}$ using a given measure-preserving bijection.
\begin{lemma}\label{lem:subsampling-linfty-with-solution}
  If a set $S \subseteq [n]$ satisfies
  \[
    \|\calK - \pi(\calK_{SS})\|_\square \leq \epsilon L \text{ and } \|\calk 1^\top- \pi(\calk_S) 1^\top\|_\square \leq \epsilon L
  \]
  for a measure-preserving bijection $\pi\colon[0,1]\to[0,1]$, then for any $\tilde{\bmv} \in \bbR^s$ with $\|\tilde{\bmv}\|_\infty \leq R$, there exists $\bmv \in \bbR^n$ such that
  \[
    \ell_{K_{SS},\bmk_S,\lambda}(\tilde{\bmv})
    =
    \ell_{K,\bmk,\lambda}(\bmv) \pm O\bigl(\epsilon L^2 R^2 \bigr)
    \quad \text{and} \quad
    \pi(\tilde{\calv}) = \calv.
  \]
\end{lemma}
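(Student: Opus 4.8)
The plan is to move everything onto graphons, where the two matrix losses become \emph{literally equal} to their graphon counterparts, transport the given point $\tilde\bmv$ through $\pi$, and then control the difference of the two graphon losses term by term using the two cut-norm hypotheses and Lemma~\ref{lem:Hayashi-approximation}. First I would record the exact correspondence between the matrix and graphon losses. For $\bmv\in\bbR^n$ with associated step function $\calv$, the $n$-block structure gives that $\calK\calv$ is the step function of the vector $K\bmv/n$ and $\calk$ is the step function of $\bmk$; integrating then yields $\|\calK\calv-\calk\|_2^2=\frac1n\|\frac{K\bmv}{n}-\bmk\|_2^2$ and $\langle\calv,\calK\calv\rangle=\frac1{n^2}\langle\bmv,K\bmv\rangle$, so that $\ell_{K,\bmk,\lambda}(\bmv)=\ell_{\calK,\calk,\lambda}(\calv)$, and the same identity at resolution $s$ gives $\ell_{K_{SS},\bmk_S,\lambda}(\tilde\bmv)=\ell_{\calK_{SS},\calk_S,\lambda}(\tilde\calv)$. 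Next I set $\calv:=\pi(\tilde\calv)$. Because the bijection $\pi$ furnished by Lemma~\ref{lem:Hayashi} is block-wise, $\calv$ is constant on each $I_k^n$ and is therefore the step function of an honest vector $\bmv\in\bbR^n$, each of whose coordinates equals a coordinate of $\tilde\bmv$; hence $\|\bmv\|_\infty\le\|\tilde\bmv\|_\infty\le R$, and the required identity $\pi(\tilde\calv)=\calv$ holds by construction. Since $\pi$ is measure-preserving and commutes with the graphon action ($\pi(\calK_{SS})\pi(\tilde\calv)=\pi(\calK_{SS}\tilde\calv)$, and the $L^2$ norm is $\pi$-invariant), the loss is invariant under $\pi$, so
\[
  \ell_{K_{SS},\bmk_S,\lambda}(\tilde\bmv)
  =\ell_{\pi(\calK_{SS}),\pi(\calk_S),\lambda}(\calv).
\]
It then remains to compare this with $\ell_{\calK,\calk,\lambda}(\calv)=\ell_{K,\bmk,\lambda}(\bmv)$.

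For the core estimate I would write $D:=\calK-\pi(\calK_{SS})$ (a graphon with $\|D\|_\square\le\epsilon L$) and $d:=\calk-\pi(\calk_S)$ (a function with $\|d\,1^\top\|_\square\le\epsilon L$), so that $\pi(\calK_{SS})\calv=\calK\calv-D\calv$ and $\pi(\calk_S)=\calk-d$. Expanding the squared norm then gives
\[
  \ell_{\pi(\calK_{SS}),\pi(\calk_S),\lambda}(\calv)-\ell_{\calK,\calk,\lambda}(\calv)
  =-2\langle \calK\calv-\calk,\,D\calv-d\rangle+\|D\calv-d\|_2^2-\lambda\langle\calv,D\calv\rangle .
\]
Every term now carries at least one factor of $D$ or $d$. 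I would bound each using Lemma~\ref{lem:Hayashi-approximation} in its homogeneous product form $|\langle f,\calA g\rangle|\le\|\calA\|_\square\|f\|_\infty\|g\|_\infty$ (obtained from the stated lemma by rescaling $f,g$), together with the sup-norm bounds $\|\calv\|_\infty\le R$, $\|\calK\calv\|_\infty\le LR$, $\|D\calv\|_\infty\le 2LR$, $\|\calk\|_\infty,\|d\|_\infty\le2L$; the $d$-terms are first rewritten as $\langle h,(d\,1^\top)1\rangle$ so that the cut-norm hypothesis on the outer product applies. Each resulting term is $O(\epsilon L^2R^2)$, and summing the finitely many of them yields $\ell_{\pi(\calK_{SS}),\pi(\calk_S),\lambda}(\calv)=\ell_{\calK,\calk,\lambda}(\calv)\pm O(\epsilon L^2R^2)$, which combined with the two identities above is exactly the claim.

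The step I expect to be the main obstacle is the quadratic-in-difference term $\|D\calv\|_2^2$ appearing in $\|D\calv-d\|_2^2$: the crude bound $\|D\calv\|_\infty\le 2LR$ only gives $O(L^2R^2)$ with no gain in $\epsilon$, which would be useless. The fix is to read $\|D\calv\|_2^2=\langle D\calv,D\calv\rangle$ as a single application of the cut-norm inequality in which \emph{one} copy of $D$ supplies the factor $\|D\|_\square\le\epsilon L$ while the other argument $D\calv$ is merely bounded in sup-norm; this restores the $\epsilon$ and keeps the term at $O(\epsilon L^2R^2)$. The remaining delicacy is the measure-theoretic bookkeeping in the first paragraph, namely verifying that the block-wise $\pi$ genuinely sends the $s$-step function $\tilde\calv$ to an $n$-step function so that $\calv=\pi(\tilde\calv)$ corresponds to a vector in $\bbR^n$ with the stated sup-norm control. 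I would emphasize that this lemma is essentially the per-vector refinement of the estimates underlying Lemma~\ref{lem:subsampling-linfty}: the same cut-norm bounds are used, but here we retain the explicit transported point $\calv=\pi(\tilde\calv)$ instead of passing to minimizers, which is precisely what the prediction analysis in Section~\ref{sec:prediction} requires.
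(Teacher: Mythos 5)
Your proposal is correct and follows essentially the same route as the paper's proof: convert both losses to their graphon forms via the block-constant correspondence, transport through the measure-preserving block-wise bijection $\pi$ (so that $\calv=\pi(\tilde{\calv})$ is $n$-block constant and defines $\bmv$ with $\|\bmv\|_\infty\leq R$), expand $\pi(\calK_{SS})=\calK-D$ and $\pi(\calk_S)=\calk-d$, and bound every term carrying a factor of $D$ or $d$ via Lemma~\ref{lem:Hayashi-approximation} together with the outer-product rewriting $d=(d\,1^\top)1$. The only divergence is cosmetic and in your favor: for the term $\|D\calv\|_2^2$ your single cut-norm application gives $O(\epsilon L^2R^2)$, whereas the paper asserts the bound $\|D\|_\square^2\|\calv\|_\infty^2$ (i.e., $\epsilon^2L^2R^2$), which does not follow directly from Lemma~\ref{lem:Hayashi-approximation}; both estimates are absorbed into the same final $O(\epsilon L^2R^2)$ error.
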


The following lemma states that, if $\bmv + \bmDelta$ and $\bmv$ have similar normalized losses, then $\Phi(\bmDelta)$ must be small in $\calH$-norm.
\begin{lemma}\label{lem:l2-norm-of-perturbation}
  For any vectors $\bmv$, and $\bmDelta\in \bbR^n$, we have
\begin{align*}
    &\|\Phi(\bmDelta)\|_\calH
    = O\biggl(n\sqrt{\frac{\ell_{K,\bmk,\lambda}(\bmv+\bmDelta) - \ell_{K,\bmk,\lambda}(\bmv)}{\lambda }}\biggr).
\end{align*}
\end{lemma}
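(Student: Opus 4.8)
The plan is to reduce everything to the positive-semidefinite quadratic form $\langle\bmDelta,K\bmDelta\rangle$. First I would record the basic identity $\|\Phi(\bmDelta)\|_\calH^2 = \langle\bmDelta,K\bmDelta\rangle$: by the definition $\Phi(\bmDelta)=\sum_i \phi_{\bmx_i}\Delta_i$ and the reproducing property $K_{ij}=\langle\phi_{\bmx_i},\phi_{\bmx_j}\rangle_\calH$, expanding the squared norm gives $\sum_{i,j}\Delta_i\Delta_j K_{ij}$. Hence the claim is equivalent to the estimate $\langle\bmDelta,K\bmDelta\rangle = O\bigl(\tfrac{n^2}{\lambda}(\ell_{K,\bmk,\lambda}(\bmv+\bmDelta)-\ell_{K,\bmk,\lambda}(\bmv))\bigr)$, so it suffices to control the contribution of the regularizer to the loss increment.

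Next I would exploit that $\ell_{K,\bmk,\lambda}$ is an exact quadratic in its argument, so the increment splits as $\ell_{K,\bmk,\lambda}(\bmv+\bmDelta)-\ell_{K,\bmk,\lambda}(\bmv) = \langle\nabla\ell_{K,\bmk,\lambda}(\bmv),\bmDelta\rangle + Q(\bmDelta)$, where the second-order term, obtained by collecting the purely quadratic-in-$\bmDelta$ pieces of the data-fit and penalty parts, is $Q(\bmDelta)=\tfrac{1}{n^3}\|K\bmDelta\|_2^2 + \tfrac{\lambda}{n^2}\langle\bmDelta,K\bmDelta\rangle$. Since $K\succeq0$ (it is a Gram matrix) we also have $K^2\succeq0$, so both summands of $Q$ are nonnegative, and in particular
\[
  Q(\bmDelta) \ge \frac{\lambda}{n^2}\langle\bmDelta,K\bmDelta\rangle = \frac{\lambda}{n^2}\|\Phi(\bmDelta)\|_\calH^2 .
\]
Thus the penalty alone furnishes the required strong-convexity-type lower bound in the $\Phi$-direction, and I can simply discard the data-fit curvature.

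The remaining, and genuinely delicate, point is the linear term $\langle\nabla\ell_{K,\bmk,\lambda}(\bmv),\bmDelta\rangle$: unlike $Q(\bmDelta)$ it has no definite sign, and a short computation shows that whenever a strictly decreasing direction is available at $\bmv$ the increment can be made arbitrarily small while $\|\Phi(\bmDelta)\|_\calH$ stays bounded away from zero, so the bound cannot hold for a truly arbitrary $\bmv$. I would therefore apply the lemma with $\bmv$ equal to the minimizer $\bmv^*$ of $\ell_{K,\bmk,\lambda}$ (the regime in which it is invoked inside Theorem~\ref{the:prediction-linfty}), where first-order optimality gives $\nabla\ell_{K,\bmk,\lambda}(\bmv^*)=0$ and annihilates the linear term; more generally any $\bmv$ with $\langle\nabla\ell_{K,\bmk,\lambda}(\bmv),\bmDelta\rangle\ge0$ works. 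With the linear term nonnegative, combining with the displayed inequality yields $\ell_{K,\bmk,\lambda}(\bmv+\bmDelta)-\ell_{K,\bmk,\lambda}(\bmv)\ge \tfrac{\lambda}{n^2}\|\Phi(\bmDelta)\|_\calH^2$, and rearranging and taking square roots gives $\|\Phi(\bmDelta)\|_\calH\le n\sqrt{(\ell_{K,\bmk,\lambda}(\bmv+\bmDelta)-\ell_{K,\bmk,\lambda}(\bmv))/\lambda}$, which is the asserted estimate (with implied constant $1$). I expect the treatment of this linear/optimality term to be the main obstacle, since everything else is a direct semidefinite estimate.
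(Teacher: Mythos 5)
Your proposal is correct, and it takes a genuinely different---and in fact sounder---route than the paper. Both arguments expand the exact quadratic increment and both extract the penalty curvature $\frac{\lambda}{n^2}\langle\bmDelta,K\bmDelta\rangle=\frac{\lambda}{n^2}\|\Phi(\bmDelta)\|_\calH^2$ as the operative lower bound (the paper carries a coefficient $3$ here owing to a sign slip when expanding the penalty; your coefficient $1$ is the correct one). The divergence is in the treatment of the first-order term. The paper keeps it for an arbitrary base point $\bmv$, bounds each linear piece by Cauchy--Schwarz in $\calH$, and arrives at an inequality of the form $ax^2-bx\le c$ with $x=\|\Phi(\bmDelta)\|_\calH$, $a=\Theta(\lambda/n^2)$, $c=\ell_{K,\bmk,\lambda}(\bmv+\bmDelta)-\ell_{K,\bmk,\lambda}(\bmv)$, and $b=2\bigl(L^{3/2}R+L^{3/2}+\lambda L^{1/2}R\bigr)/n$. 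The correct conclusion from this is $x\le\frac{b+\sqrt{b^2+4ac}}{2a}\le\frac{b}{a}+\sqrt{c/a}$; the paper instead writes the root as $\frac{b-\sqrt{b^2-4ac}}{2a}$ and concludes $x\le\sqrt{c/a}$, silently discarding the additive term $b/a=\Theta\bigl(n(L^{3/2}R+L^{3/2}+\lambda L^{1/2}R)/\lambda\bigr)$. Your reflection observation shows this term cannot be discarded: for $\bmv\ne\bmv^*$ with $\Phi(\bmv-\bmv^*)\ne 0$, taking $\bmDelta=2(\bmv^*-\bmv)$ gives a zero loss increment but $\|\Phi(\bmDelta)\|_\calH>0$, so the lemma is false as literally stated for arbitrary $\bmv$, and the paper's final step is a genuine error rather than a loss of constants. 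Your fix---invoking first-order optimality $\nabla\ell_{K,\bmk,\lambda}(\bmv^*)=0$ (or merely $\langle\nabla\ell_{K,\bmk,\lambda}(\bmv),\bmDelta\rangle\ge 0$) to annihilate the linear term---yields the stated bound with constant $1$, and it covers exactly the regime in which the lemma is invoked: in the proof of Theorem~\ref{the:prediction-linfty} it is applied only with base point $\bmv^*$, the unconstrained minimizer of the convex quadratic $\ell_{K,\bmk,\lambda}$ (and analogously for $\ell_{K_{SS},\bmk_S,\lambda}$), where the gradient indeed vanishes. What the paper's approach would buy, once corrected, is a bound valid at arbitrary base points at the price of that extra additive term; your restricted version is what the paper actually needs and proves it cleanly.
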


\begin{proof}[Proof of Theorem~\ref{the:prediction-linfty}]
  On applying Lemma~\ref{lem:Hayashi} to $K$, $\bmk \bmone^\top$, and $\bmy \bmone^\top$, we have
  \begin{align*}
  \|\calK - \pi(\calK_{SS})\|_\square \leq \epsilon L, \|\calk - \pi(\calk_S)\|_\square \leq \epsilon L,
  \end{align*}
  and
  \begin{align*}
  \|\caly - \pi(\caly_S)\|_\square \leq \epsilon L,
  \end{align*}
  which holds for a some measure-preserving bijection $\pi\colon[0,1]\to[0,1]$   with a probability of at least 0.99.
  In what follows, we assume that this has happened.

  Let $\tilde{\bmv}^* \in \bbR^s$ be the minimizer of $\ell_{K_{SS},\bmk_S,\lambda}$ that is returned by Algorithm~\ref{alg:loss}, and let $\bmv \in \bbR^n$ be the vector given by Lemma~\ref{lem:subsampling-linfty-with-solution} on $\tilde{\bmv}^*$.
  Then, we have
\begin{align*}
    \ell_{K,\bmk,\lambda}(\bmv) &= \ell_{K_{SS},\bmk_S,\lambda}(\tilde{\bmv}^*) + O\Bigl(\epsilon L^2R^2\Bigr) \\
    &= \ell_{K,\bmk,\lambda}(\bmv^*) + O\Bigl(\epsilon L^2R^2\Bigr).
\end{align*}
  This means that $\|\Phi(\bmv-\bmv^*)\|_\calH = O\bigl(\sqrt{\epsilon} LR n\bigr)$ by Lemma~\ref{lem:l2-norm-of-perturbation}.
  Let $\pi$ be the measure-preserving bijection given by Lemma~\ref{lem:subsampling-linfty-with-solution}.
  Then, we have
  \begin{align}
   & \tilde{\mu}_{\bmx^*} = \frac{\langle \tilde{\bmv},\bmy_S\rangle}{s} = \langle \tilde{\calv}, \caly_S \rangle = \langle \pi(\tilde{\calv}), \pi(\caly_S)\rangle = \langle \calv, \pi(\caly_S)\rangle \nonumber \\
  & = \langle \calv, \caly \rangle + \langle \calv, \pi(\caly_S) - \caly \rangle = \frac{\langle \bmv, \bmy \rangle}{n} + \langle \calv, \pi(\caly_S) - \caly \rangle \nonumber \\
  & = \frac{\langle \bmv^*, \bmy \rangle}{n} + \frac{\langle \bmv - \bmv^*, \bmy \rangle}{n} + \langle \calv, \pi(\caly_S) - \caly \rangle \notag\\
  &= \mu_{\bmx^*} + \frac{{\langle \Phi(\bmv - \bmv^*), \bmalpha \rangle}_\calH}{n} + \langle \calv, \pi(\caly_S) - \caly \rangle. \label{eq:prediction-linfty-1}
  \end{align}
  By Cauchy-Schwarz and Lemma~\ref{lem:Hayashi-approximation}, we have
  \begin{align}
  \eqref{eq:prediction-linfty-1} &= \mu_{\bmx^*} \pm \Bigl(\frac{\|\Phi(\bmv^* - \bmv)\|_\calH \|\bmalpha\|_\calH}{n} \notag \\
  & \qquad \qquad + \| \calv\|_\infty \|(\pi(\caly_S) - \caly)1^\top\|_\square \|1\|_\infty \Bigr)
  \notag \\
  &= {\mu}_{\bmx^*} \pm O\left(\sqrt{\epsilon} L^2R \right). \label{eq:prediction-linfty-1.5}
  \end{align}

  Similarly, we have
  \begin{align}
     \tilde{\sigma}^2_{\bmx^*}
    &= k(\bmx^*,\bmx^*) - \frac{\langle \tilde{\bmv},\bmk_S\rangle}{s} \notag
    = k(\bmx^*,\bmx^*) - \frac{\langle \tilde{\calv},\calk_S\rangle}{s} \notag \\
    &= k(\bmx^*,\bmx^*) - \langle \pi(\tilde{\calv}),\pi(\calk_S) \rangle \notag \\
    &= k(\bmx^*,\bmx^*) - \langle \calv,\pi(\calk_S) \rangle \nonumber \\
    & = k(\bmx^*,\bmx^*) - \langle \calv,\calk \rangle - \langle \calv, \pi(\calk_S) - \calk \rangle \notag \\
    &= k(\bmx^*,\bmx^*) - \frac{\langle \bmv,\bmk \rangle}{n}- \langle \calv, \pi(\calk_S) - \calk \rangle \nonumber \\
    & = k(\bmx^*,\bmx^*) - \frac{\langle \bmv^*,\bmk \rangle}{n}- \frac{\langle \bmv - \bmv^*,\bmk \rangle}{n} -  \langle \calv, \pi(\calk_S) - \calk \rangle \notag \\
    &= \sigma^2_{\bmx^*} - \frac{\langle \Phi(\bmv - \bmv^*), \phi_{\bmx^*} \rangle_\calH}{n} - \langle \calv, \pi(\calk_S) - \calk \rangle. \label{eq:prediction-linfty-2}
  \end{align}
  By Cauchy-Schwarz and Lemma~\ref{lem:Hayashi-approximation}, we have
  \begin{align*}
   \eqref{eq:prediction-linfty-2} &= \sigma^2_{\bmx^*} \pm \Bigl(\frac{\|\Phi(\bmv - \bmv^*)\|_\calH \|\phi_{\bmx^*}\|_\calH}{n}\\
   & \qquad \qquad + \|\calv\|_\infty \|(\pi(\calk_S) - \calk)1^\top\|_\square \|1\|_\infty\Bigr)\\
    &= \sigma^2_{\bmx^*} \pm O\left(\sqrt{\epsilon} L^2R \right).
  \end{align*}
\end{proof}


\section{Application to Hyperparameter Selection}\label{sec:cv}

GPR has several hyperparameters such as $\lambda$ in~\eqref{eq:normalized-loss} and hyperparameters used in kernel functions, e.g., the bandwidth $h>0$ in the Gaussian kernel $k(\bmx,\bmx')=\exp(-h^{-1}\|\bmx-\bmx'\|^2_2)$ and the parameters $h,a,b$ in the polynomial kernel $k(\bmx,\bmx') = {(h^{-1}\langle \bmx, \bmx' \rangle + b)}^a$.
Cross validation (CV)~\cite{geisser1975predictive,stone1974cross} is a popular approach for selecting such hyperparameters, although it is computationally expensive.
In this section, we show that we can circumvent this issue by using our method (Algorithm~\ref{alg:prediction}).

Let $\theta$ be the set of hyperparameters, e.g., $\theta = (\lambda,h)$ for the Gaussian kernel.
We consider a predictor $\hat{f}_{S,\theta}(\bmx^*)$, which is the predictive mean obtained when we run Algorithm~\ref{alg:prediction} on $\bmx^* \in \bbR^p$ with hyperparameters $\theta$ and the index set $S \subseteq [n]$ of size $s \in \bbN$.
Furthermore, let $f^0_\theta(\bmx) := \mu_{\bmx^*}$ be the predictive mean using all the $n$ samples. 
For any $\theta$, we assume that $f^*$, $\hat{f}_{S,\theta}$, and $f^0_\theta$ are bounded and have finite second moments, i.e., $B:=\max\set{\|f^*\|_{\infty}, \|f^0_\theta\|_{\infty},\|\hat{f}_{S,\theta}\|_{\infty}}$ and $B_{\sigma} :=\max\set{\|f^*\|_{2}^2, \|f^0_\theta\|_{2}^2, \|\hat{f}_{S,\theta}\|_{2}^2}$ are finite.
These assumptions are standard and easy to verify for bounded kernels (Section 4 of~\cite{steinwart2008support} presents detailed discussions).

We want to compute the expected loss of the (original) predictive mean $\mathrm{EL}(\theta) := \Ep_{\bmx}[{(f^*(\bmx) - f^0_\theta(\bmx))}^2]$ for a given $\theta$ and then select the best $\theta$.\footnote{We discuss hyperparameter tuning based on the marginal likelihood in Section~\ref{sec:discussion}.}
To this end, in the CV, we first sample an index set $Q \subseteq [n] \backslash S$ of size $q \leq n - s$ uniformly at random.
We then define the CV loss as
\begin{align}
  \mathrm{CV}_Q(\hat{f}_{S,\theta}) :=  \frac{1}{q}\sum_{i \in Q} {(y_i - \hat{f}_{S,\theta}(\bmx_i))}^2.
  \label{eq:cv-error}
\end{align}


Now, we evaluate the selection performance of the CV based on Algorithm~\ref{alg:prediction}.
For simplicity, we assume that we have two candidates for the choice of hyperparameters, $\theta_1$ and $\theta_2$.
Then, we have the following:
\begin{theorem}\label{thm:cv2}
	Suppose that $\mathrm{EL}(\theta_1) + \Xi < \mathrm{EL}(\theta_2)$ holds for some $\Xi > 0$.
    Let us define $\omega(s)$ as the upper bound on $|\mu_{\bmx^*} - \tilde{\mu}_{\bmx^*}|$ given in Theorem~\ref{the:prediction-linfty}, and a parameter $\tilde{\Xi}(s):=\Xi -3 {\omega(s)}^2 - 2\omega(s) B-\nu^2 (4B+\omega(s))$.
  Then for any $s,q \geq 1$,
   \begin{align*}
      \mathrm{CV}_Q(\hat{f}_{S,\theta_1}) \leq \mathrm{CV}_Q(\hat{f}_{S,\theta_2}),
   \end{align*}
   holds, with probability at least
  \begin{align*}
		1- 4\exp\left( - \frac{q}{2} \left( \frac{\tilde{\Xi}(s)}{B^2} - \frac{9 B_\sigma^2 }{B^4} \right) \right)  -\frac{3\nu^2(4B+2\omega(s))}{q\tilde{\Xi}(s)}.
  \end{align*}
\end{theorem}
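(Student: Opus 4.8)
The plan is to prove the ordering $\mathrm{CV}_Q(\hat f_{S,\theta_1}) \le \mathrm{CV}_Q(\hat f_{S,\theta_2})$ by showing that the random quantity $\Delta := \frac{1}{q}\sum_{i\in Q}(W_i^{(2)} - W_i^{(1)})$ is nonnegative with the stated probability, where $W_i^{(j)} := (y_i - \hat f_{S,\theta_j}(\bmx_i))^2$. I would first condition on the subset $S$, so that the predictors $\hat f_{S,\theta_1},\hat f_{S,\theta_2}$ are fixed, and treat the remaining randomness as coming from the uniform draw of $Q$ and the Gaussian noise $\{\xi_i\}$. Writing $g_j := f^* - \hat f_{S,\theta_j}$ and $y_i = f^*(\bmx_i)+\xi_i$, each summand expands as $W_i^{(j)} = g_j(\bmx_i)^2 + 2\xi_i g_j(\bmx_i) + \xi_i^2$. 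The crucial observation is that the unbounded term $\xi_i^2$ is common to $j=1,2$ and therefore cancels in $W_i^{(2)}-W_i^{(1)}$, leaving a bounded signal part $g_2(\bmx_i)^2-g_1(\bmx_i)^2$ (with $|g_j|\le 2B$) plus a mean-zero noise cross term $2\xi_i(g_2-g_1)(\bmx_i)$.

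Next I would lower-bound the expected gap. Taking expectations over $\bmx$ and $\xi$ gives $\Ep[W_i^{(j)}] = \Ep_{\bmx}[(f^*-\hat f_{S,\theta_j})^2]+\nu^2$, so the $\nu^2$ offsets cancel in $\Ep[\Delta]$. To connect this to $\mathrm{EL}(\theta_j) = \Ep_{\bmx}[(f^*-f^0_{\theta_j})^2]$, I would invoke the prediction guarantee of Theorem~\ref{the:prediction-linfty}, i.e.\ $\|\hat f_{S,\theta_j}-f^0_{\theta_j}\|_\infty \le \omega(s)$, and substitute $\hat f_{S,\theta_j}=f^0_{\theta_j}+e_j$ with $|e_j|\le\omega(s)$. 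Expanding $(f^*-\hat f_{S,\theta_j})^2$ and using $|f^*-f^0_{\theta_j}|\le 2B$ together with $|e_j|\le\omega(s)$ shows $\Ep_{\bmx}[(f^*-\hat f_{S,\theta_j})^2] = \mathrm{EL}(\theta_j)\pm O(\omega(s)B+\omega(s)^2)$. Combined with the assumed margin $\mathrm{EL}(\theta_2)-\mathrm{EL}(\theta_1) > \Xi$, this yields $\Ep[\Delta] \ge \Xi - O(\omega(s)B+\omega(s)^2)$, which is exactly the shrinkage encoded in the effective margin $\tilde{\Xi}(s)$.

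The concentration step follows the decomposition above. For the bounded signal part, $g_j(\bmx_i)^2\in[0,4B^2]$ with fluctuations controlled through a fourth-moment estimate expressed in terms of $B_\sigma$, I would apply a Chernoff bound evaluated at a fixed scale $\lambda=\Theta(1/B^2)$ rather than the optimized one; this produces the factor $\exp(-\tfrac{q}{2}(\tilde{\Xi}(s)/B^2 - 9B_\sigma^2/B^4))$, and a union bound over the constant number of one-sided deviation events needed to separate $\tfrac1q\sum_i g_2(\bmx_i)^2$ from $\tfrac1q\sum_i g_1(\bmx_i)^2$ accounts for the prefactor $4$. For the mean-zero noise cross term $\tfrac1q\sum_i 2\xi_i(g_2-g_1)(\bmx_i)$, whose Gaussian factors preclude a bounded-difference argument, I would instead use a Markov/Chebyshev bound: its second moment is $O(\nu^2 B^2/q)$, and bounding the probability that it drops below $-\tilde{\Xi}(s)$ contributes the polynomially decaying term $\tfrac{3\nu^2(4B+2\omega(s))}{q\,\tilde{\Xi}(s)}$. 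Allocating the margin $\Xi$ so that the approximation corrections and this noise-fluctuation allowance are absorbed into $\tilde{\Xi}(s)$, the event $\{\Delta\ge 0\}$ contains the intersection of these controlled events, and a union bound delivers the claimed probability.

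I expect the main obstacle to be the unbounded Gaussian noise: the raw CV summands are not bounded, so no single concentration inequality applies to $\Delta$ directly. The resolution is the two-part treatment above—cancel the common $\xi_i^2$ term so that only $\nu^2$-scale fluctuations, not the full noise level, erode the margin; isolate the bounded signal for an exponential tail; and defer the genuinely heavy-tailed cross term to a weaker Markov estimate, which is why the second probability term decays only as $1/q$. A secondary technical point is that Theorem~\ref{the:prediction-linfty} is stated for a single test point, so some care is needed to use the bound $\omega(s)$ simultaneously at all evaluation points $\bmx_i$, $i\in Q$; I would handle this by treating $\omega(s)$ as the uniform prediction-error bound implicit in its definition (equivalently, applying the multi-matrix form of Lemma~\ref{lem:Hayashi} to all the query vectors at once) and keeping that event folded into the conditioning on $S$.
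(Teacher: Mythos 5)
Your proposal follows essentially the same route as the paper's proof: expand the CV difference so that the $\xi_i^2$ terms cancel, replace $\hat{f}_{S,\theta_\ell}$ by $f^0_{\theta_\ell}\pm\omega(s)$ via Theorem~\ref{the:prediction-linfty}, apply Bernstein's inequality to the two signal terms (your ``fixed-scale Chernoff'' bound is exactly the paper's relaxation of the Bernstein exponent, yielding $\frac{q}{2}\bigl(\tilde{\Xi}(s)/B^2-9B_\sigma^2/B^4\bigr)$ with the prefactor $4$ coming from the two two-sided applications), control the noise contribution by a Chebyshev-type second-moment bound that gives the $1/q$ term, and allocate the margin as $t_1=t_2=t_3(4B+2\omega(s))=\tilde{\Xi}(s)/3$. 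The only minor deviation is the noise cross term: the paper does not exploit its zero mean but bounds it through $\frac{1}{q}\sum_{i\in Q}\epsilon_i^2$ and applies Chebyshev to that sum---which is precisely where the bias $-\nu^2(4B+\omega(s))$ in the definition of $\tilde{\Xi}(s)$ originates---whereas your direct mean-zero Chebyshev yields slightly different constants but an even larger effective margin, so the stated bound still follows.
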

Note that $\tilde{\Xi}$ is an increasing function in $\Xi$ and $s$.
Hence, Theorem~\ref{thm:cv2} implies that the probability that the approximated CV succeeds increases as $q$, $s$, and $\Xi$ increase.

\section{Experiments}

\subsection{Approximation Accuracy}\label{sec:exp-approx}

First, we evaluated the performance of subsampling with a constant number of samples that are covered by our theory, that is, the predictive mean/variance~\eqref{eq:mean-and-variance}, the minimum of the normalized loss function~\eqref{eq:normalized-loss}, and the CV error~\eqref{eq:cv-error}. Here, we used five real datasets (libsvm datasets\footnote{\url{https://www.csie.ntu.edu.tw/~cjlin/libsvmtools/datasets}}) whose sample sizes are in thousands such that we could run the exact GPR for comparison. Each data set was standardized beforehand so that $y$ and each feature of $\bmx$ are ranged in $[-1, 1]$.

\begin{figure}[t]
  \centering
  \includegraphics[width=.99\linewidth]{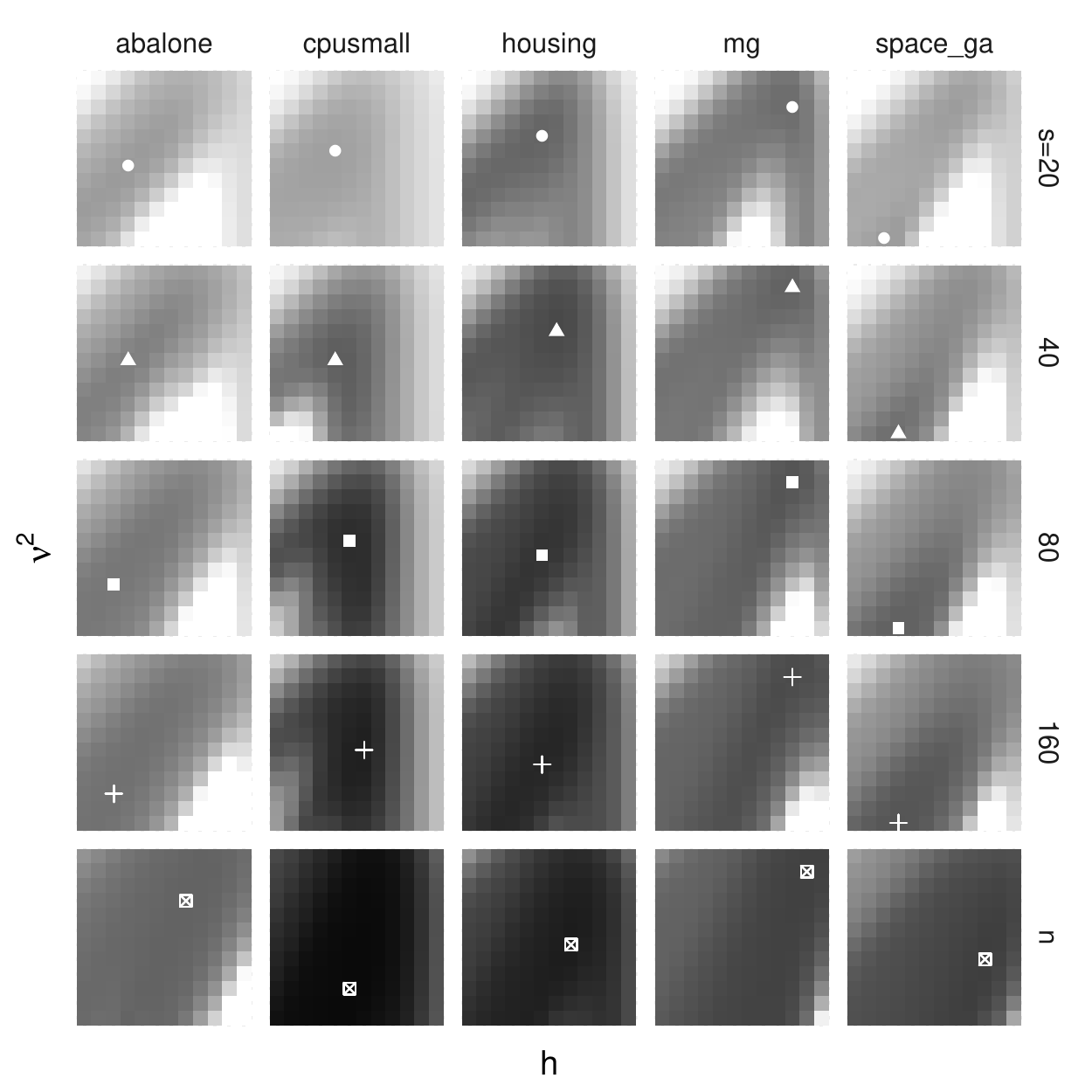}\\
  \includegraphics[width=.99\linewidth]{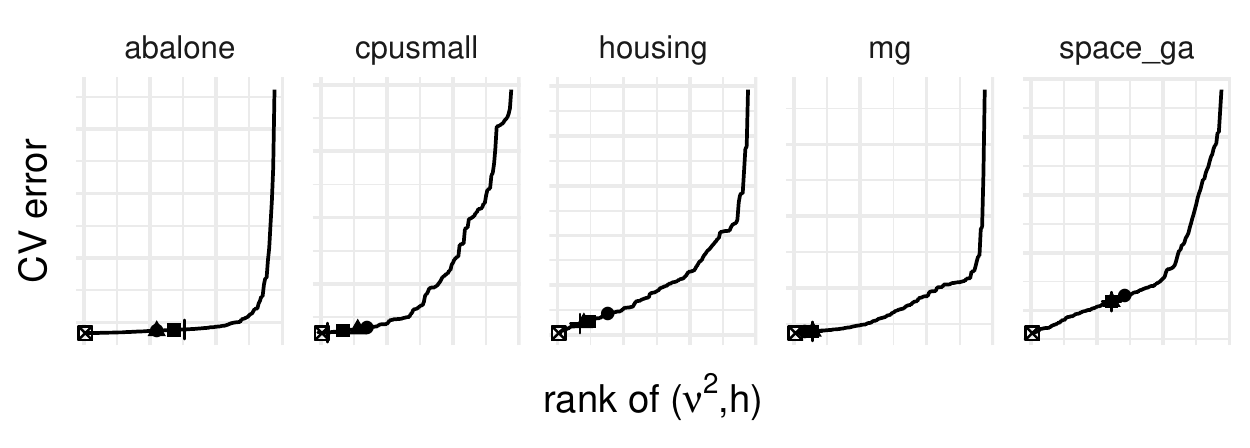}
\caption{Top: Contours of the CV error for the noise variance $\nu^2$ and the kernel bandwidth $h$ in a logarithmic scale ($\nu^2,h^{-1}\in\set{10^{-i/3} \mid i=0,1,\dots,11}$). White dots indicate the selected hyperparameters by CV (i.e., the minima of the contours). Bottom: CV error for each pair $(\nu^2,h)$ in increasing order. The selected $(\nu^2,h)$ with other $s$ are also shown as black dots with the same shape as above. }\label{fig:cverror}
\end{figure}

The upper part of Figure~\ref{fig:cverror} shows the contour of the 10-fold CV error with the Gaussian kernel. In datasets \texttt{housing} and \texttt{mg}, subsampling successfully selected the hyperparameters that were sufficiently close to the ones selected by the full-sample CV\@. In \texttt{abalone} and \texttt{cpusmall}, the selected hyperparameters look far. However, this was because the landscape of the full-sample CV error was flat (the lower part of Figure~\ref{fig:cverror}) and it was difficult to choose the optimal hyperparameters even in the original CV\@. Indeed, this case corresponds to the case that $\Xi$ in Theorem~\ref{thm:cv2} is small, and these empirical results agree with the claim of Theorem~\ref{thm:cv2}: the hyperparameter selection may fail for small $\Xi$.

\begin{figure}[t]
  \centering
  \includegraphics[width=.99\linewidth]{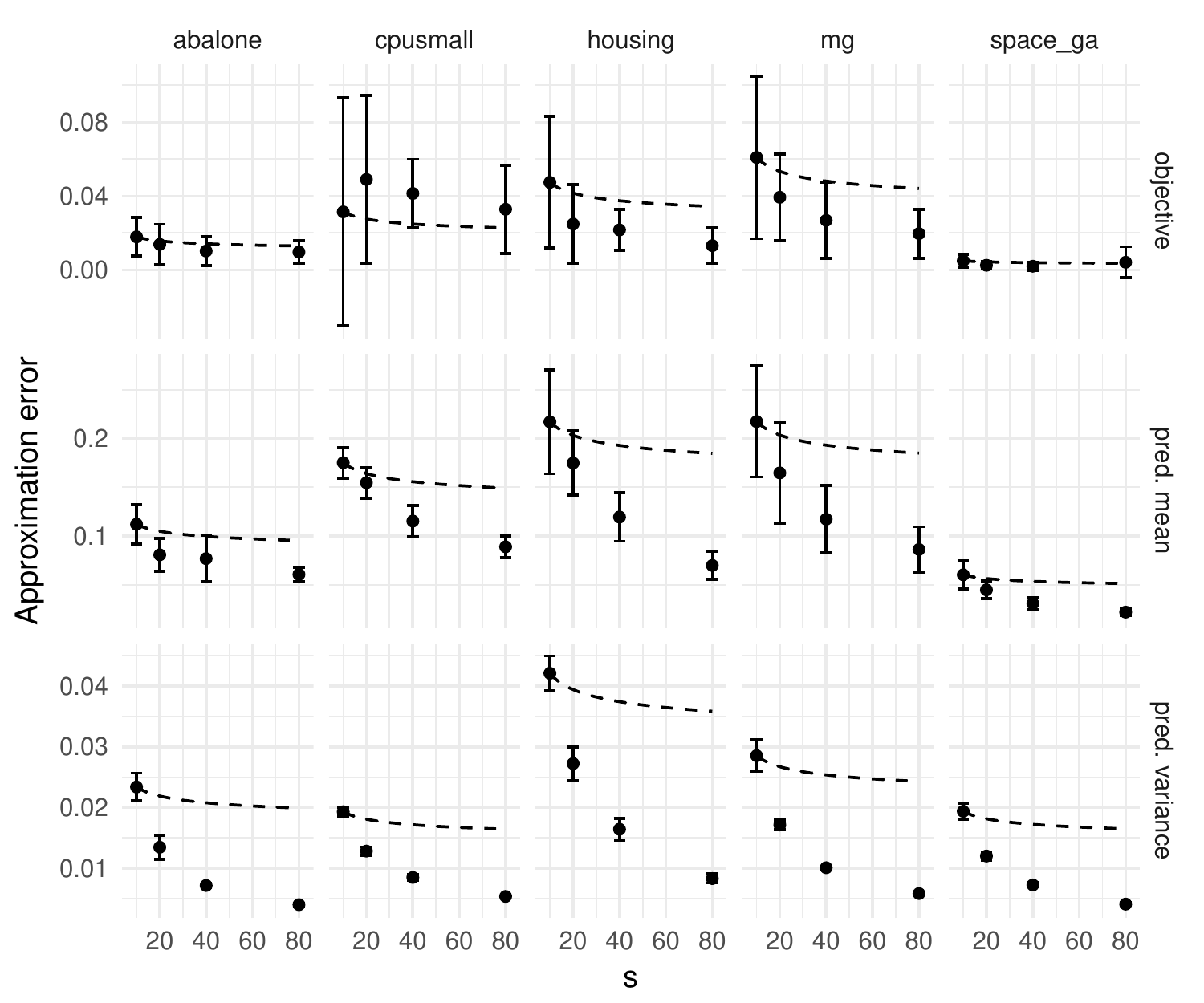}
\caption{Errors of the predictive mean/variance and the normalized loss function with the Gaussian kernel. The hyperparameters were set as the noise variance $\nu^2=0.01$ and the bandwidth $h=10$. The error bars indicate the standard deviation of the results over ten trials with different random seeds. The dashed lines indicate the theoretical bounds (Theorems~\ref{the:loss-linfty} and~\ref{the:prediction-linfty}) where we set unknown linear coefficients of the bounds as they fit the results.}\label{fig:obj}
\end{figure}

Figure~\ref{fig:obj} shows the errors of the predictive mean $|\mu_{\bmx^*} - \tilde{\mu}_{\bmx^*}|$, predictive variance $ |\sigma_{\bmx^*}^2 - \tilde{\sigma}_{\bmx^*}^2|$, and the objective $|\ell_{K,\bmk,\lambda}(\bmv^*) - \ell_{K_{SS},\bmk_S,\lambda}(\tilde{\bmv}^*)|$ with the Gaussian kernel.
We see that the errors, especially of the predictive mean and variance, decrease faster than we expect from the theoretical convergence rate of $O(\log ^{-1/4} s)$ shown in the dashed lines.

\begin{figure}[t]
  \centering
  \includegraphics[width=.99\linewidth]{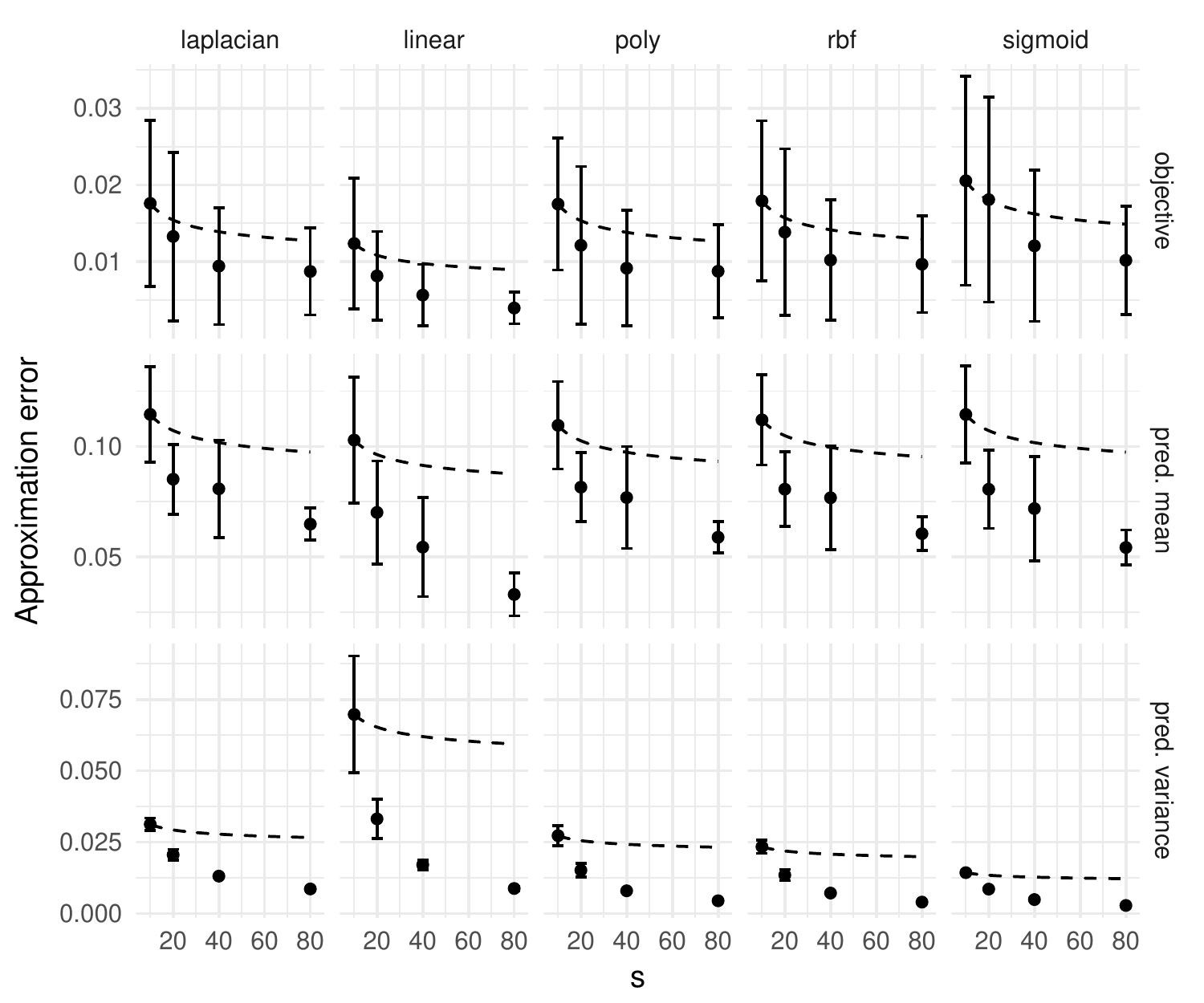}
\caption{Approximation errors with the Laplacian, linear, polynomial, Gaussian (RBF), and sigmoid kernel functions on \texttt{abalone} data set. The hyperparameters were set as the noise variance $\nu^2=0.01$ and the bandwidth $h=10$. Other kernel parameters were fixed as the default values of scikit-learn library.}\label{fig:variouskernel}
\end{figure}

We also investigated how the choice of kernel functions affects the approximation quality.  Figure~\ref{fig:variouskernel} shows a similar behavior as in Figure~\ref{fig:obj} no matter which kernel function is used. We observe that all kernel functions behave very similarly, meaning subsampling works independently of the choice of the kernel function as our theory suggested. Due to the page limitation, we only show the result with a single data set here; see Appendix~\ref{supp:variouskernel} for the complete results.

\subsection{Prediction Accuracy and Runtime}

\begin{figure}[t]
  \centering
  \includegraphics[width=.86\linewidth]{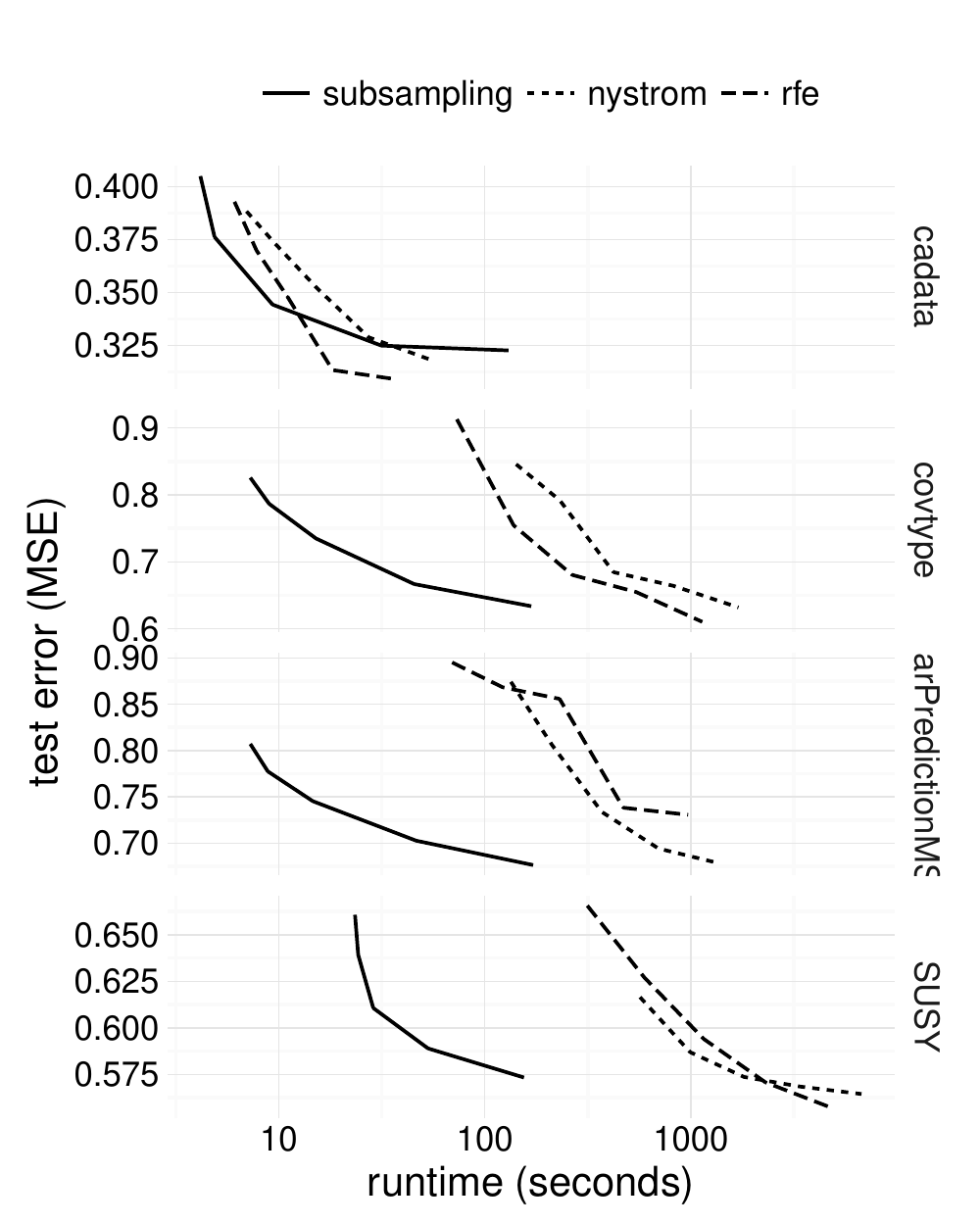}
\caption{Comparison of test error and runtime. We changed $s$ over $\set{10\cdot 2^{i}\mid i\in\set{4,\dots,8}}$ for subsampling, and changed the rank and the dimension of the feature functions over $\set{10\cdot 2^{i}\mid i\in\set{1,\dots,5}}$
for the \nystrom method and RFE, respectively.}\label{fig:tradeoff}
\end{figure}

Next, we compared the prediction performance with the \nystrom method and RFE. Specifically, we are interested in the trade-off between the prediction accuracy on the test data (i.e., the generalization power) and the runtime. To this end, we prepared relatively large-scale datasets: \texttt{cadata} ($n\simeq$ 20K), \texttt{YearPredictionMSD} ($\simeq$ 0.4M), \texttt{covtype} ($\simeq$ 0.6M), and \texttt{SUSY} ($\simeq$ 5M). Note that  the labels of \texttt{covtype} and \texttt{SUSY} were binary but we regarded them as real values. To evaluate the prediction performance, we split each data set into a test set consisting of $1,000$ randomly selected samples and a training set consisting of the rest of the samples.
We selected the hyperparameters by 3-fold CV for each method.
Note that all the methods were implemented in Python and their runtime was recorded on an Amazon EC2 r4.16xlarge instance.

Figure~\ref{fig:tradeoff} depicts the trade-off curves between the test error and the runtime for CV and prediction. Again, subsampling showed convincing results.
In \texttt{cadata}, the smallest data set, subsampling and \nystrom were competitive, and RFE was slightly better than them. However, in the large datasets, the curves of subsampling were consistently located in the left-bottom side, meaning that subsampling significantly extends the Pareto frontier in terms of the trade-off. Note that all of those approximation methods converge to the exact GPR so that, by increasing runtime, they eventually end up at the same error.



\section{Discussion}\label{sec:discussion}

In this work, we explored the theoretical aspects of random subsampling of GPR\@. Using graphons, we built the error bounds for the predictive distribution and generalization. Although the derived rates are slower than other structure-based approximations, they only require minimum assumptions. The experimental results demonstrated that subsampling achieves a better speed-accuracy trade-off than the \nystrom and RFE methods when the number of samples $n$ is sufficiently large (say, $n>10^5$). Combining the theoretical and empirical results, we conclude that subsampling is worth a try as well as more other complicated approximations.

The empirical results (Figures~\ref{fig:obj}--\ref{fig:tradeoff}) repeatedly indicate that the actual performance of subsampling is far better than theoretically expected. This would be because the derived bounds (Theorem~\ref{the:prediction-linfty} and Corollary~\ref{cor:gen}) are too conservative. Actually, they consider almost worst-case scenarios, such as the truth function is peaky everywhere or drawn subsamples are densely collected in a small input area. Adding some realistic assumptions such as smoothness may help to derive better error bounds.

We have shown that the CV strategy well admits subsampling (Section~\ref{sec:cv}), but we may want to use subsampling to approximate other criteria. The marginal likelihood would be the most popular criterion in the GP community for hyperparameter selection~\cite{rasmussen2004gaussian}. Unfortunately, our analysis is not immediately applicable to approximating it. Let us explain why. The marginal likelihood has the explicit form of $\log\det (K + n\lambda I) + \langle\bmy, {(K + n\lambda I)}^{-1} \bmy\rangle + n/2 \log 2\pi $. The second term has the quadratic form as we have already seen (e.g., Eq.~\ref{eq:the-problem}) and indeed subsampling can approximate it. The difficulty is in the first term, which we have to deal with the determinant of the kernel matrix. Remember that we treat the kernel matrix as the graphon in our analysis. However, the determinant of the graphon is not well-defined, meaning that we cannot compare kernel matrices with different sample size, and therefore, the approximation accuracy remains unknown. Further investigation on the marginal likelihood approximation is one of our future works.

\bibliographystyle{abbrv}
\bibliography{main}

\newpage

\appendix
\onecolumn

\begin{center}
    \LARGE Appendix
\end{center}

\section{Proof of Section~ \ref{sec:algo}}

\subsection{Proof of Corollary~\ref{cor:gen}}

\begin{proof}

Firstly, we derive an error of $\tilde{\mu}_{\bmx^*}$ when the observed data follow the regression model \eqref{model:reg}.
Namely, we show the following equality with probability at least $1-\delta$,
\begin{align*}
    \tilde{\mu}_{\bmx^*} &= \mu_{\bmx^*} +  O\left( n^{-1/2} R \right) \pm O \left( \sqrt{\epsilon }L'^2 R \right),
\end{align*}
with the model.
This equality is an analogous of the inequality \eqref{eq:prediction-linfty-1.5} without the assumption of the regression model \eqref{model:reg}.

We start with \eqref{eq:prediction-linfty-1} and obtain
\begin{align}
       & \tilde{\mu}_{\bmx^*} = \frac{\langle \bmv^*, \bmy \rangle}{n} + \frac{\langle \bmv - \bmv^*, \bmy \rangle}{n} + \langle \calv, \pi(\caly_S) - \caly \rangle .\label{eq:append_gen_1}
\end{align}
By the model \eqref{model:reg}, we have $\bmy=\bmf + \bmxi$ where $\bmf := (f(\bmx_1),...,f(\bmx_n))^\top$ and $\bmxi := (\xi_1,...,\xi_n)^\top$, then we obtain
\begin{align*}
    \langle \bmv - \bmv^*, \bmy \rangle &= \langle \bmv - \bmv^*, \bmf \rangle + \langle \bmv - \bmv^*, \bmxi \rangle  = \langle \Phi(\bmv - \bmv^*), \bmalpha \rangle_\calH + \sum_{i \in [n]} \xi_i (v_i - v_i^*). 
\end{align*}
About the second term $\sum_{i \in [n]} \xi_i (v_i - v_i^*)$, we define $\bar{v}_i := (v_i - v_i^*)$, then we have
\begin{align*}
    \sum_{i \in [n]} \xi_i (v_i - v_i^*) \sim \mathscr{N}\left(0,\nu^2 \|\bar{\bmv}\|^2_2\right),
\end{align*}
since $\xi_i \sim \mathscr{N}(0,\nu^2)$ independently and identically.
Then, we apply the tail bound for Gaussian random variables and obtain
\begin{align*}
    \left|\sum_{i \in [n]} \xi_i (v_i - v_i^*) \right| \leq \sqrt{2} \nu \|\bar{\bmv}\|_2  \log^{1/2} (1/\delta),
\end{align*}
with probability at least $1-\delta$ for any $\delta \in (0,1)$.
By definition of $\bmv$, it has the same $\ell_\infty$ norm of $\bmv^*$, meaning $\|\bar{\bmv}\|_\infty\leq 2R$. Since $\|\bmu\|_2 \leq \sqrt{n}\|\bmu\|_\infty$ for any $\bmu\in\mathbb{R}^n$, we have $\|\bar{\bmv}\|_2 \leq \sqrt{4n}R$, and
\begin{align*}
    \left|\sum_{i \in [n]} \xi_i (v_i - v_i^*) \right| \leq \sqrt{8n} \nu R \log^{1/2} (1/\delta).
\end{align*}
Substituting the result into \eqref{eq:append_gen_1}, and the Cauchy-Schwartz inequality with Lemma~\ref{lem:Hayashi-approximation} as \eqref{eq:prediction-linfty-1.5} yields
\begin{align*}
    \tilde{\mu}_{\bmx^*} &=\mu_{\bmx^*} \pm O\left( \frac{\sqrt{8} \nu R \log^{1/2} (1/\delta)}{\sqrt{n}}\right)   \pm O\Bigl(\frac{\|\Phi(\bmv^* - \bmv)\|_\calH \|\bmalpha\|_\calH}{n} + \| \calv\|_\infty \|(\pi(\caly_S) - \caly)1^\top\|_\square \|1\|_\infty \Bigr) \\
    &=\mu_{\bmx^*} \pm O\left(\frac{\sqrt{8} \nu R \log^{1/2} (1/\delta)}{\sqrt{n}} \right)  \pm O\left(\sqrt{\epsilon} L'^2R \right).
\end{align*}
Substituting $\delta = 0.01$, then we obtain
\begin{align*}
    \tilde{\mu}_{\bmx^*} = \mu_{\bmx^*} + O\left( n^{-1/2} R \right) \pm O \left( \sqrt{\epsilon }L'^2 R \right).
\end{align*}
When we substitute $\epsilon = O( \log^{1/2} n)$, the second term $O\left( n^{-1/2} R \right)$ is negligible asymptotically in comparison with $O \left( \sqrt{\epsilon }L^2 R \right)$, hence we can ignore the second term as $n \to \infty$.
\end{proof}

\section{Proofs of Section~\ref{sec:loss}}

\subsection{Proof of Lemma~\ref{lem:subsampling-linfty}}

We say that a function $f \colon [0,1] \to \bbR$ is \emph{$n$-block constant} if $f(x)=f(x')$ holds whenever $i_n(x) = i_n(x')$.
For an $n$-block constant $f$, we can find $\bmv\in \bbR^n$ such that $\ell_{K,\bmk,\lambda}(\bmv) = \ell_{\calK,\calk,\lambda}(f)$:
\begin{lemma}\label{lem:block-constant}
  Let $f\colon [0,1] \to \bbR$ be an $n$-block constant function and let $\bmv \in \bbR^n$ be a vector so that $v_j = f^*(x)$ for $x \in [0,1]$ with $i_n(x) = j$ (Note that $\bmv$ is uniquely determined).
  Then, we have
  \[
    \ell_{K,\bmk,\lambda}(\bmv) = \ell_{\calK,\calk,\lambda}(f).
  \]
\end{lemma}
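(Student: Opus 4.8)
The plan is to observe that the hypotheses force $f$ to coincide with the canonical function $\calv$ attached to $\bmv$ (since $f$ is $n$-block constant and takes the value $v_j=f(x)$ on the block $I_j^n$), and then to translate each of the two terms of $\ell_{\calK,\calk,\lambda}(\calv)$ into the matching term of $\ell_{K,\bmk,\lambda}(\bmv)$ by a direct integral computation that exploits block-constancy.

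First I would record the fundamental correspondence between graphon-function multiplication and matrix-vector multiplication. Fixing $x$ and writing $i=i_n(x)$, I would split the integral defining $(\calK\calv)(x)$ over the partition $I_1^n,\dots,I_n^n$, on each of which $\calK(x,\cdot)$ and $\calv$ are constant, equal to $K_{ij}$ and $v_j$ respectively on $I_j^n$. Since each block has length $1/n$, this yields
\[
  (\calK\calv)(x) = \sum_{j=1}^n K_{ij}v_j \cdot \frac{1}{n} = \frac{(K\bmv)_i}{n},
\]
so that $\calK\calv$ is precisely the block-constant function attached to the vector $K\bmv/n$.

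With this identity in hand, both terms follow by the same block-wise integration. For the data-fit term I would split $\int_0^1\bigl((\calK\calv)(x)-\calk(x)\bigr)^2\,\rmd x$ over the blocks; on $I_i^n$ the integrand is constant and equals $\bigl((K\bmv)_i/n-k_i\bigr)^2$, giving $\frac{1}{n}\|K\bmv/n-\bmk\|_2^2$, exactly the first term of $\ell_{K,\bmk,\lambda}$. For the regularization term I would expand $\langle\calv,\calK\calv\rangle=\int_0^1 v_{i_n(x)}\,(K\bmv)_{i_n(x)}/n\,\rmd x$ blockwise; each block contributes $v_i(K\bmv)_i/n\cdot 1/n$, so the integral equals $\frac{1}{n^2}\langle\bmv,K\bmv\rangle$, matching the second term after multiplying by $\lambda$. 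Summing the two gives the asserted equality.

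This argument is entirely mechanical, so I do not expect a genuine obstacle; the only point requiring care is the bookkeeping of the factors of $1/n$ — one factor arising from the block length in the $\calK\calv$ identity, and a second from the block length in the outer $L^2$ integral. Getting both right is exactly what makes the normalization in $\ell_{K,\bmk,\lambda}$ (the $1/n$ prefactor on the fit term and the $1/n^2$ on the quadratic term) line up with the unnormalized graphon loss.
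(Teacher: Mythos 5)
Your proof is correct and follows essentially the same route as the paper's: both arguments come down to blockwise integration over the partition $I_1^n,\dots,I_n^n$, using that every function in sight is constant on each block of length $1/n$, with the two factors of $1/n$ you flag producing exactly the $1/n$ and $\lambda/n^2$ normalizations. The only difference is organizational — you isolate the single identity $(\calK\calv)(x) = (K\bmv)_{i_n(x)}/n$ and then integrate the two loss terms directly, whereas the paper expands the square and matches the four terms $\|\calK f\|_2^2$, $\langle \calk, \calK f\rangle$, $\|\calk\|_2^2$, $\langle f, \calK f\rangle$ against their matrix counterparts one by one; the underlying computation is identical.
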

\begin{proof}
  Note that we have
  \begin{align*}
    \ell_{K,\bmk,\lambda}(\bmv) & = \frac{1}{n^3}\|K\bmv\|_2^2 - \frac{2}{n^2}\langle \bmk, K\bmv\rangle + \frac{1}{n}\|\bmk\|_2^2 + \frac{\lambda}{n^2} \langle \bmv, K\bmv  \rangle, \\
    \ell_{\calK,\calk,\lambda}(\bmv) & = \|\calK f\|_2^2 - 2\langle \calk, \calK f\rangle + \|\calk\|_2^2 + \lambda \langle f, \calK f  \rangle.
  \end{align*}
  We show that each pair of corresponding terms are equal.

  For the first pair of terms, we have
  \begin{align*}
    & \|\calK f\|_2^2
    =
    \int_0^1 {\Bigl(\int_0^1 \calK(x,y) f(y) \rmd y\Bigr)}^2 \rmd x
    =
    \sum_{i \in [n]} \int_{I^n_i}{\Bigl(\sum_{j \in [n]}\int_{I^n_j} \calK(x,y) f(y) \rmd y\Bigr)}^2 \rmd x \\
    & =
    \sum_{i \in [n]} \int_{I^n_i}{\Bigl(\sum_{j \in [n]}\int_{I^n_j} K_{ij} v_j \rmd y\Bigr)}^2 \rmd x
    =
    \sum_{i \in [n]} \int_{I^n_i}{\Bigl(\frac{1}{n}\sum_{j \in [n]} K_{ij} v_j \Bigr)}^2 \rmd x  \\
    & =
    \frac{1}{n^3}\sum_{i \in [n]} {\Bigl(\sum_{j \in [n]} K_{ij} v_j \Bigr)}^2
    = \frac{1}{n^3} \|K\bmv\|_2^2.
  \end{align*}
  For the second pair of terms, we have
  \begin{align*}
    & \langle \calk, \calK f\rangle
    =
    \int_0^1 \calk(x)\Bigl(\int_0^1 \calK(x,y) f(y) \rmd y\Bigr) \rmd x \\
    & =
    \sum_{i \in [n]} \int_{I^n_i} \calk(x) \Bigl(\sum_{j \in [n]}\int_{I^n_j}\calK(x,y) f(y) \rmd y\Bigr) \rmd x
    =
    \sum_{i \in [n]} \int_{I^n_i} y_i \Bigl(\sum_{j \in [n]}\int_{I^n_j}K_{ij} v_j \rmd y\Bigr) \rmd x \\
    & =
    \sum_{i \in [n]} \int_{I^n_i} y_i \Bigl(\frac{1}{n} \sum_{j \in [n]} K_{ij}v_j \Bigr) \rmd x
    =
    \frac{1}{n^2} \sum_{i \in [n]} y_i \Bigl(\sum_{j \in [n]} K_{ij} v_j \Bigr)
    = \frac{1}{n^2}\langle \bmk,K\bmv\rangle.
  \end{align*}
  For the third pair of terms, we have
  \begin{align*}
    \|\calk\|_2^2 = \int_0^1 {\calk(x)}^2 \rmd x = \sum_{i \in [n]}\int_{I_i^n} {\calk(x)}^2 \rmd x
    =  \sum_{i \in [n]}\int_{I_i^n} y_i^2 \rmd x
    = \frac{1}{n}\sum_{i \in [n]} y_i^2
    = \frac{1}{n}\|\bmk\|_2^2.
  \end{align*}

  For the fourth pair of terms, we have
  \begin{align*}
    & \langle f, \calK f\rangle
    =
    \int_0^1 \int_0^1 \calK(x,y) f(x) f(y) \rmd x \rmd y
    =
    \sum_{i \in [n]}\sum_{j \in [n]} \int_{I_i^n}\int_{I_j^m} \calK(x,y) f(x) f(y) \rmd x \rmd y \\
    & =
    \frac{1}{n^2}n\sum_{i \in [n]}\sum_{j \in [n]}K_{ij} v_i v_j
    = \frac{1}{n^2}\langle \bmv, K\bmv \rangle.
  \end{align*}
  Combining these equalities establishes the claim.
\end{proof}

The following lemma states that minimizing $\ell_{K,\bmk,\lambda}$ and $\ell_{\calK,\calk,\lambda}$ are equivalent:
\begin{lemma}\label{lem:equivalence-linfty}
  For any $R \in \bbR_+$, we have
  \begin{align*}
    \min_{\bmv \in \bbR^n:\|\bmv\|_\infty \leq R} \ell_{K,\bmk,\lambda}(\bmv)
    =
    \min_{f\colon [0,1] \to \bbR:\|f\|_\infty \leq R} \ell_{\calK,\calk,\lambda}(f).
  \end{align*}
\end{lemma}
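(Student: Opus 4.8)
The plan is to establish the two inequalities separately, in each case mapping a feasible point of one problem to a feasible point of the other of equal objective value. The structural fact I would lean on throughout is that $\calK$ and $\calk$ are $n$-block constant, since they are built from $K$ and $\bmk$ through the partition $I_1^n,\dots,I_n^n$; consequently $\ell_{\calK,\calk,\lambda}(f)$ will turn out to depend on $f$ only through its block averages.

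For the inequality $\min_f \le \min_\bmv$ I would invoke Lemma~\ref{lem:block-constant} directly. Taking a minimizer $\bmv$ with $\|\bmv\|_\infty \le R$ and forming the associated $n$-block constant function $\calv$ (which has $\|\calv\|_\infty = \|\bmv\|_\infty \le R$), that lemma gives $\ell_{\calK,\calk,\lambda}(\calv) = \ell_{K,\bmk,\lambda}(\bmv)$, so $\calv$ is a feasible function attaining the vector optimum.

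The reverse inequality is the substantive part. For an arbitrary feasible $f$ (not necessarily block constant) I would set $\bar v_j = n\int_{I_j^n} f(y)\,\rmd y$, the average of $f$ on the $j$-th block, and $\bar\bmv = (\bar v_1,\dots,\bar v_n)$. The key computation is the factorization $\ell_{\calK,\calk,\lambda}(f) = \ell_{K,\bmk,\lambda}(\bar\bmv)$. It holds because block-constancy of $\calK$ in its second argument yields $(\calK f)(x) = \tfrac1n (K\bar\bmv)_{i_n(x)}$, which is itself block constant and sees $f$ only through $\bar\bmv$; plugging this into $\|\calK f - \calk\|_2^2 = \tfrac1n\|\tfrac1n K\bar\bmv - \bmk\|_2^2$ and into $\langle f, \calK f\rangle = \tfrac1{n^2}\langle \bar\bmv, K\bar\bmv\rangle$ reproduces verbatim the two terms of $\ell_{K,\bmk,\lambda}(\bar\bmv)$ in~\eqref{eq:normalized-loss}. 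The calculation parallels Lemma~\ref{lem:block-constant}, the only new point being that $f$ is no longer assumed block constant. Finally $|\bar v_j| \le n\int_{I_j^n}|f| \le \|f\|_\infty$ by the triangle inequality, so $\|\bar\bmv\|_\infty \le R$, and $\bar\bmv$ is a feasible vector of the same objective value, giving $\min_\bmv \le \min_f$.

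I expect essentially all the work to sit in the factorization $\ell_{\calK,\calk,\lambda}(f) = \ell_{K,\bmk,\lambda}(\bar\bmv)$ for general $f$; the quadratic form of the loss together with the block-constancy of $\calK$ and $\calk$ makes the dependence on $f$ collapse onto $\bar\bmv$, and no genuine analytic difficulty remains beyond this bookkeeping. Equivalently, one can package both directions at once by observing that block averaging sends $\{\|f\|_\infty \le R\}$ onto $\{\|\bmv\|_\infty \le R\}$ --- surjectivity being witnessed by the block-constant functions used in the easy direction --- while preserving the objective, so that the two minima are literally the same number.
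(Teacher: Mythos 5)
Your proof is correct, and for the substantive direction it takes a genuinely different route from the paper. Both proofs handle the direction $\min_f \le \min_{\bmv}$ identically, by applying Lemma~\ref{lem:block-constant} to the block-constant function built from a minimizing vector. For the reverse direction, the paper starts from a minimizer $f^*$ of the functional problem (whose existence it asserts from convexity of $\ell_{\calK,\calk,\lambda}$) and uses a symmetrization argument --- convexity plus invariance under swapping values within a block --- to conclude that some block-constant minimizer exists, after which Lemma~\ref{lem:block-constant} applies. You instead show that the objective \emph{factors through block averaging}: since $\calK$ and $\calk$ are $n$-block constant, one has $(\calK f)(x) = \frac{1}{n}(K\bar\bmv)_{i_n(x)}$ and $\langle f, \calK f\rangle = \frac{1}{n^2}\langle \bar\bmv, K\bar\bmv\rangle$ with $\bar v_j = n\int_{I_j^n} f$, so that $\ell_{\calK,\calk,\lambda}(f) = \ell_{K,\bmk,\lambda}(\bar\bmv)$ \emph{exactly} for every feasible $f$, with $\|\bar\bmv\|_\infty \le \|f\|_\infty \le R$. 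This is a pointwise (feasible-point-to-feasible-point) correspondence rather than an argument about minimizers, and it buys you two things: it sidesteps the existence-of-minimizer step, which in the paper is the one slightly delicate point (convexity alone does not guarantee attainment of the minimum over an infinite-dimensional function class, though here attainment follows once the reduction to block-constant functions is made); and it upgrades the paper's ``averaging does not increase the loss'' to the sharper fact that averaging leaves the loss unchanged. The paper's symmetrization argument is in exchange more generic --- it would apply to any convex functional invariant under within-block rearrangements, even one that did not collapse exactly onto block averages --- but for this particular quadratic loss your computation is the cleaner and more self-contained proof.
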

\begin{proof}
  First, we show (RHS) $\leq$ (LHS).
  Let $\bmv^*$ be a minimizer of the LHS and let $f\colon[0,1] \to \bbR$ with $f(x) = v^*_{i_n(x)}$.
  Note that $\|f\|_\infty = \|\bmv^*\|_\infty$.
  As $f$ is $n$-block constant, by Lemma~\ref{lem:block-constant}, we have $\ell_{\calK,\calk,\lambda}(f) = \ell_{K,\bmk,\lambda}(\bmv^*)$.

  Next, we show (LHS) $\leq$ (RHS).
  Let $f^*:[0,1]\to \bbR$ be a minimizer of the RHS, which exists because $\ell_{\calK,\calk,\lambda}$ is convex.
  First, we observe that we can assume $f(x)=f(x')$ for every $x,x'\in [0,1]$ with $i_n(x)=i_n(x')$.
  To see this, note that $\ell_{\calK,\calk,\lambda}$ is convex and is invariant under swapping $f(x)$ and $f(x')$ for any $x,x'\in [0,1]$ with $i_n(x)=i_n(x')$.
  Hence, we can decrease the value of $\ell_{\calK,\calk,\lambda}$ by replacing $f(x)$ and $f(x')$ with their average.
  Moreover, $\|f\|_\infty$ does not increase through this modification.
  This means that there is a minimizer $f^*$ of $\ell_{\calK,\calk,\lambda}$ with the desired property.
  Now as $f^*$ is $n$-block constant, Lemma~\ref{lem:block-constant} gives a vector $\bmv \in \bbR^n$ such that $\ell_{K,\bmk,\lambda}(\bmv) = \ell_{\calK,\calk,\lambda}(f^*)$.
  Also, $\|\bmv\|_\infty = \|f^*\|_\infty$.
\end{proof}

\begin{proof}[Proof of Lemma~\ref{lem:subsampling-linfty}]
  We have
  \begin{align}
    & \min_{\tilde{\bmv} \in \bbR^s} \ell_{K_{SS},\bmk_S,\lambda}(\tilde{\bmv}) = \min_{\tilde{\bmv} \in \bbR^s:\|\tilde{\bmv}\|_\infty \leq R} \ell_{K_{SS},\bmk_S,\lambda}(\tilde{\bmv}) = \min_{\substack{f\colon [0,1] \to \bbR: \\ \|f\|_\infty \leq R}} \ell_{\calK_{SS},\calk_S,\lambda}(f) \tag{By Lemma~\ref{lem:equivalence-linfty}} \nonumber \\
    & =
    \min_{\substack{f\colon [0,1] \to \bbR: \\ \|f\|_\infty \leq R}} \| \calK_{SS} f\|_2^2- 2 \langle \calk_S, \calK_{SS} f \rangle + \|\calk_S\|_2^2 + \lambda \langle f, \calK_{SS} f\rangle \nonumber \\
    & = \min_{\substack{f\colon [0,1] \to \bbR: \\ \|f\|_\infty \leq R}} \| \pi(\calK_{SS}) f\|_2^2- 2 \langle \pi(\calk_S), \pi(\calK_{SS}) f \rangle + \|\pi(\calk_S)\|_2^2 + \lambda \langle f, \pi(\calK_{SS}) f\rangle \nonumber \\
    & =
    \min_{\substack{f\colon [0,1] \to \bbR: \\ \|f\|_\infty \leq R}} \Bigl\| \bigl(\pi(\calK_{SS})-\calK+\calK\bigr) f\Bigr\|_2^2- 2 \Bigl\langle \pi(\calk_S) - \calk + \calk, \bigl(\pi(\calK_{SS}) - \calK + \calK\bigr)f \Bigr\rangle   \nonumber \\
    & \qquad \qquad \qquad + \|\pi(\calk_S) - \calk + \calk\|_2^2 + \lambda  \Bigl\langle f, \bigl(\pi(\calK_{SS}) -\calK+\calK\bigr) f\Bigr\rangle \nonumber \\
    & =
    \min_{\substack{f\colon [0,1] \to \bbR: \\ \|f\|_\infty \leq R}} \|\calK f\|_2^2  + 2\Bigl\langle \bigl(\pi(\calK_{SS})-\calK\bigr) f, \calK f \Bigr\rangle  + \Bigl\| \bigl(\pi(\calK_{SS})-\calK\bigr) f\Bigr\|_2^2   \nonumber \\
    &\qquad \qquad\qquad - 2 \langle \calk, \calK f \rangle - 2 \Bigl\langle \calk, \bigl(\pi(\calK_{SS}) - \calK\bigr)f \Bigr\rangle - 2 \bigl\langle \pi(\calk_S)-\calk, \calK f \bigr\rangle \nonumber \\
    & \qquad \qquad\qquad - 2 \Bigl\langle \pi(\calk_S) - \calk, \bigl(\pi(\calK_{SS}) - \calK\bigr)f \Bigr\rangle +  \|\calk\|_2^2 + 2\bigl\langle \pi(\calk_S)-\calk,\calk\bigr\rangle + \bigl\|\pi(\calk_S)-\calk\bigr\|_2^2  \nonumber \\
    & \qquad \qquad\qquad + \lambda  \langle f, \calK f\rangle + \lambda  \Bigl\langle f, \bigl(\pi(\calK_{SS})- \calK\bigr) f\Bigr\rangle. \label{eq:subsampling-linfty-1}
  \end{align}
  By Lemma~\ref{lem:Hayashi-approximation} and using the fact that $\pi(\calk_S) - \calk = (\pi(\calk_S) - \calk) 1^\top 1$, we have
  \begin{align}
    & \eqref{eq:subsampling-linfty-1} =
    \min_{\substack{f\colon [0,1] \to \bbR: \\ \|f\|_\infty \leq R}}  \|\calK f\|_2^2 - 2 \langle \calk, \calK f \rangle + \|\calk\|_2^2 + \lambda  \langle f, \calK f\rangle  \nonumber \\
    &\qquad \qquad \qquad \pm \Bigl(2 \bigl\|\pi(\calK_{SS})-\calK\bigr\|_\square \|\calK\|_\square \|f\|_\infty^2  +  \bigl\|\pi(\calK_{SS})-\calK\bigr\|_\square^2 \|f\|_\infty^2  \nonumber \\
    & \qquad \qquad \qquad + 2 \bigl\|\pi(\calK_{SS}) - \calK\bigr\|_\square \|\calk\|_\infty \|f\|_\infty + 2 \|\calK\|_\square \bigl\|(\pi(\calk_S)-\calk)1^\top \bigr\|_\square \|1\|_\infty \|f\|_\infty \nonumber \\
    & \qquad \qquad \qquad + 2 \bigl\|\pi(\calK_{SS}) - \calK\bigr\|_\square \bigl\|(\pi(\calk_S)- \calk)1^\top\bigr\|_\square \|1\|_\infty \|f\|_\infty   \nonumber \\
    & \qquad \qquad \qquad + 2\bigl\|(\pi(\calk_S)-\calk)1^\top\bigr\|_\square \|1\|_\infty\|\calk\|_\infty + \bigl\|\pi(\calk_S)-\calk\bigr\|_2^2 + \lambda \bigl\|\pi(\calK_{SS})-\calK\bigr\|_\square\|f\|_\infty^2 \Bigr). \label{eq:subsampling-linfty-2}
  \end{align}
  From the assumption, we have
  \begin{align*}
    & =
    \min_{f\colon [0,1] \to \bbR: \|f\|_\infty \leq R} \|\calK f\|_2^2 - 2 \langle \calk, \calK f \rangle + \|\calk\|_2^2 + \lambda \langle f, \calK f\rangle  \\
    &\qquad \qquad\qquad \pm \Bigl(2 \epsilon L^2 R^2  + \epsilon^2 L^2  R^2 + 2 \epsilon L^2 R + 2 \epsilon L^2 R + 2 \epsilon^2 L^2 R + 2\epsilon L^2 + \epsilon^2 L^2 + \lambda \epsilon L R^2\Bigr) \\
    & = \min_{\bmv \in \bbR^n : \|\bmv\|_\infty \leq R} \frac{1}{n^3} \|K \bmv\|_2^2 - \frac{2}{n^2} \langle \bmk, K \bmv \rangle + \frac{1}{n}\|\bmk\|_2^2 + \frac{\lambda}{n^2} \langle \bmv, \calK \bmv \rangle \pm O\Bigl( \epsilon L^2 R^2\Bigr) \tag{By Lemma~\ref{lem:equivalence-linfty}} \\
    & = \min_{\bmv \in \bbR^n : \|\bmv\|_\infty \leq R} \ell_{K,\bmk,\lambda}(\bmv)  \pm O\Bigl(\epsilon L^2 R^2 \Bigr)
    = \min_{\bmv \in \bbR^n } \ell_{K,\bmk,\lambda}(\bmv)  \pm O\Bigl(\epsilon L^2 R^2 \Bigr)
  \end{align*}
  as desired.
\end{proof}

\section{Proofs of Section~\ref{sec:prediction}}

\subsection{Proof of Lemma~\ref{lem:subsampling-linfty-with-solution}}
\begin{proof}
  Let $\tilde{\calv}\colon [0,1]\to \bbR$ be the function corresponding to $\tilde{\bmv}$, that is, $\tilde{\calv}(x) = \tilde{v}_{i_n(x)}$.
  Then, we have
  \begin{align}
    & \ell_{K_{SS},\bmk_S,\lambda}(\tilde{\bmv})
    =
    \frac{1}{s^3}\|K_{SS}\tilde{\bmv}\|_2^2- \frac{2}{s^2} \langle \bmk_S, K_{SS}\tilde{\bmv}\rangle + \frac{1}{s}\|\bmk_S\|_2^2  + \frac{\lambda}{n^2} \langle \tilde{\bmv}, K_{SS} \tilde{\bmv}\rangle \nonumber \\
    & =
    \| \calK_{SS} \tilde{\calv}\|_2^2- 2 \langle \calk_S, \calK_{SS} \tilde{\calv} \rangle + \|\calk_S\|_2^2 + \lambda \langle \tilde{\calv}, \calK_{SS} \tilde{\calv}\rangle \tag{By Lemma~\ref{lem:block-constant}} \nonumber \\
    & = \| \pi(\calK_{SS}) \pi(\tilde{\calv})\|_2^2- 2 \langle \pi(\calk_S), \pi(\calK_{SS}) \pi(\tilde{\calv}) \rangle + \|\pi(\calk_S)\|_2^2 + \lambda \langle \pi(\tilde{\calv}), \pi(\calK_{SS}) \pi(\tilde{\calv})\rangle \nonumber  \\
    & =
    (\pi(\calK_{SS})-\calK+\calK) \pi(\tilde{\calv})\|_2^2- 2 \langle \pi(\calk_S) - \calk + \calk, (\pi(\calK_{SS}) - \calK + \calK)\pi(\tilde{\calv}) \rangle \nonumber  \\
    & \qquad + \|\pi(\calk_S) - \calk + \calk\|_2^2 + \lambda  \langle \pi(\tilde{\calv}), (\pi(\calK_{SS}) -\calK+\calK) \pi(\tilde{\calv})\rangle \nonumber \\
    & =
    \|\calK \pi(\tilde{\calv})\|_2^2  + 2\langle (\pi(\calK_{SS})-\calK) \pi(\tilde{\calv}), \calK \pi(\tilde{\calv}) \rangle  + \| (\pi(\calK_{SS})-\calK) \pi(\tilde{\calv})\|_2^2 \nonumber \\
    &\qquad - 2 \langle \calk, \calK \pi(\tilde{\calv}) \rangle - 2 \langle \calk, (\pi(\calK_{SS}) - \calK)\pi(\tilde{\calv}) \rangle - 2 \langle \pi(\calk_S)-\calk, \calK \pi(\tilde{\calv}) \rangle - 2 \langle \pi(\calk_S) - \calk, (\pi(\calK_{SS}) - \calK)\pi(\tilde{\calv}) \rangle \nonumber \\
    & \qquad +  \|\calk\|_2^2 + 2\langle \pi(\calk_S)-\calk,\calk\rangle + \|\pi(\calk_S)-\calk\|_2^2 \nonumber  \\
    & \qquad + \lambda  \langle \pi(\tilde{\calv}), \calK \pi(\tilde{\calv})\rangle + \lambda  \langle \pi(\tilde{\calv}), (\pi(\calK_{SS})- \calK) \pi(\tilde{\calv})\rangle. \label{eq:subsampling-linfty-with-solution-1}
  \end{align}
  By Lemma~\ref{lem:Hayashi-approximation} and using the assumption that $\pi(\calk_S) - \calk = (\pi(\calk_S) - \calk) 1^\top 1$, we have
  \begin{align}
    & \eqref{eq:subsampling-linfty-with-solution-1} =
    \|\calK \pi(\tilde{\calv})\|_2^2 - 2 \langle \calk, \calK \pi(\tilde{\calv}) \rangle + \|\calk\|_2^2 + \lambda  \langle \pi(\tilde{\calv}), \calK \pi(\tilde{\calv})\rangle \nonumber \\
    &\qquad \pm \Bigl(2 \|\pi(\calK_{SS})-\calK\|_\square \|\calK\|_\square \|\pi(\tilde{\calv})\|_\infty^2  +  \|\pi(\calK_{SS})-\calK\|_\square^2 \|\pi(\tilde{\calv})\|_\infty^2 \nonumber \\
    & \qquad \qquad + 2 \|\pi(\calK_{SS}) - \calK\|_\square \|\calk\|_\infty \|\pi(\tilde{\calv})\|_\infty + 2 \|\calK\|_\square \|(\pi(\calk_S)-\calk)1^\top\|_\square \|1\|_\infty \|\pi(\tilde{\calv})\|_\infty \nonumber \\
    & \qquad \qquad + 2 \|\pi(\calK_{SS}) - \calK\|_\square \|(\pi(\calk_S)- \calk)1^\top\|_\square \|1\|_\infty\|\pi(\tilde{\calv})\|_\infty + 2\|(\pi(\calk_S)-\calk)1^\top\|_\square \|1\|_\infty\|\calk\|_\infty \nonumber \\
    & \qquad \qquad  + \|\pi(\calk)-\calk\|_2^2 + \lambda \|\pi(\calK_{SS})-\calK\|_2\|\pi(\tilde{\calv})\|_2^2 \Bigr). \label{eq:subsampling-linfty-with-solution-2}
  \end{align}
  Recall that $\pi$ satisfies $i_n(\pi(x))=i_n(\pi(y))$ whenever $i_n(x)=i_n(y)$.
  Then, $\pi(\tilde{\calv})$ is $n$-block constant, and hence we can define a vector $\bmv \in \bbR^n$ corresponding to $\pi(\tilde{\calv})$, that is, $v_i = \pi(\tilde{\calv})(x)$ for any $x \in [0,1]$ with $i_n(x)$.
  Then, we have
  \begin{align*}
    & \eqref{eq:subsampling-linfty-with-solution-2} =
    \|\calK \pi(\tilde{\calv})\|_2^2 - 2 \langle \calk, \calK \pi(\tilde{\calv}) \rangle + \|\calk\|_2^2 + \lambda \langle \pi(\tilde{\calv}), \calK \pi(\tilde{\calv})\rangle  \\
    &\qquad \pm \Bigl(2 \epsilon L^2 R^2  + \epsilon^2 L^2  R^2 + 2 \epsilon L^2 R + 2 \epsilon L^2 R + 2 \epsilon^2 L^2 R + 2\epsilon L^2 + \epsilon^2 L^2 + \lambda \epsilon L R^2\Bigr) \\
    & = \frac{1}{n^3} \|K \bmv\|_2^2 - \frac{2}{n^2} \langle \bmk, K \bmv \rangle + \frac{1}{n}\|\bmk\|_2^2 + \frac{\lambda}{n^2} \langle \bmv, \calK \bmv \rangle \pm O\Bigl( \epsilon L^2 R^2\Bigr) \tag{By Lemma~\ref{lem:equivalence-linfty}} \\
    & = \ell_{K,\bmk,\lambda}(\bmv)  \pm O\Bigl(\epsilon L^2 R^2 \Bigr)
  \end{align*}
  as desired.
\end{proof}

\subsection{Proof of Lemma~\ref{lem:l2-norm-of-perturbation}}
\begin{proof}
  Recall that
  \[
    \ell_{K,\bmk,\lambda}(\bmv) = \frac{1}{n^3}\|K\bmv\|_2^2 - \frac{2}{n^2}\langle \bmk, K\bmv\rangle + \frac{1}{n}\|\bmk\|_2^2 + \frac{\lambda}{n^2} \langle \bmv, K\bmv  \rangle.
  \]
  Then, we have
  \begin{align}
  & \ell_{K,\bmk,\lambda}(\bmv+\bmDelta) - \ell_{K,\bmk,\lambda}(\bmv) \nonumber \\
  & =
  \frac{1}{n^3}\|K(\bmv+\bmDelta)\|_2^2 - \frac{1}{n^3}\|K\bmv\|_2^2 - \frac{2}{n^2}\langle \bmk, K(\bmv+\bmDelta)\rangle + \frac{2}{n^2}\langle \bmk, K\bmv\rangle \nonumber \\
  & \quad  + \frac{\lambda}{n^2} \langle (\bmv + \bmDelta), K(\bmv+\bmDelta)  \rangle - \frac{\lambda}{n^2} \langle \bmv, K\bmv  \rangle \nonumber \\
  & = \frac{1}{n^3}\Bigl(2\langle K\bmv, K\bmDelta\rangle + \|K\bmDelta\|_2^2 \Bigr) - \frac{2}{n^2}\langle \bmk,K\bmDelta\rangle + \frac{\lambda}{n^2}\Bigl(2\langle \bmDelta, K(\bmv+\bmDelta)\rangle + \langle \bmDelta, K\bmDelta\rangle\Bigr) \nonumber \\
  & = \frac{1}{n^3}\Bigl(2\langle K\bmv, K\bmDelta\rangle + \|K\bmDelta\|_2^2 \Bigr) - \frac{2}{n^2}\langle \bmk,K\bmDelta\rangle + \frac{\lambda}{n^2}\Bigl(2\langle \bmv, K\bmDelta\rangle + 3\langle \bmDelta, K\bmDelta\rangle\Bigr).   \label{eq:l2-norm-of-perturbation}
  \end{align}
  Let $\lambda_1 \leq Ln$ be the largest eigenvalue of $K$.
  Let $U \Sigma V^\top$ be the SVD of $\Phi$, where $U\in \bbR^{p \times p}$, $\Sigma = \mathrm{diag}(\sigma_1,\ldots,\sigma_p)$ for $\sigma_1 \geq \cdots \geq \sigma_p$, and $V\colon \calH \to \bbR^p$.
  As $K_{ij} = \langle \phi_{\bmx_i},\phi_{\bmx_j}\rangle_{\calH}$, we have $K = U\Sigma^2U^\top$ and hence $\sigma_1 = \lambda_1^{1/2}$
  By Cauchy-Schwarz, we have
  \begin{align*}
    & \eqref{eq:l2-norm-of-perturbation} \geq -\frac{2}{n^3} \|\Phi(K\bmv)\|_\calH\|\Phi(\bmDelta)\|_\calH - \frac{2}{n^2}\|\Phi(\bmk)\|_\calH\|\Phi(\bmDelta)\|_\calH + \frac{\lambda}{n^2}\Bigl( 3\|\Phi(\bmDelta)\|_\calH^2 -2\|\Phi(\bmv)\|_\calH \|\Phi(\bmDelta)\|_\calH \Bigr) \\
    & \geq - \frac{2}{n^{5/2}} \lambda_{\max}^{3/2}R\|\Phi(\bmDelta)\|_\calH - \frac{2\lambda_{\max}^{1/2} L}{n^{3/2}}\|\Phi(\bmDelta)\|_\calH + \frac{\lambda}{n^2}\Bigl( 3 \|\Phi(\bmDelta)\|_\calH^2 -2 \lambda_{\max}^{1/2} Rn^{1/2}\|\Phi(\bmDelta)\|_\calH \Bigr) \\
    & \geq \frac{3 \lambda }{n^2} \|\Phi(\bmDelta)\|_\calH^2 -  \frac{2(L^{3/2} R + L^{3/2}  + \lambda L^{1/2} R)}{n} \|\Phi(\bmDelta)\|_\calH.
  \end{align*}

  Then for
  \[
    a = \frac{3\lambda}{n^2} \quad \text{and} \quad b = \frac{2(L^{3/2}R+L^{3/2}+\lambda L^{1/2}R)}{n},
  \]
  we have
  \begin{align*}
    & \|\Phi(\bmDelta)\|_\calH \leq \frac{b-\sqrt{b^2 - 4a(\ell_{K,\bmk,\lambda}(\bmv+\bmDelta) - \ell_{K,\bmk,\lambda}(\bmv))}}{2a} \leq
    \sqrt{\frac{\ell_{K,\bmk,\lambda}(\bmv+\bmDelta) - \ell_{K,\bmk,\lambda}(\bmv)}{a}}\\
    & \leq n\sqrt{\frac{\ell_{K,\bmk,\lambda}(\bmv+\bmDelta) - \ell_{K,\bmk,\lambda}(\bmv)}{3\lambda}} = O\biggl(n\sqrt{\frac{\ell_{K,\bmk,\lambda}(\bmv+\bmDelta) - \ell_{K,\bmk,\lambda}(\bmv)}{\lambda}}\biggr)
  \end{align*}
  as desired.
\end{proof}
The lemma also holds for $\ell_{K_{SS},\bmk_S,\lambda}$ and $\ell_{\calK_{SS},\calk_S,\lambda}$.

\section{Proofs of Section~\ref{sec:cv}}

\subsection{Proof of Theorem~\ref{thm:cv2}}

\begin{proof}
We evaluate the difference between the cross-validated loss values as
\begin{align*}
	&\mathrm{CV}(\theta_1) - \mathrm{CV}(\theta_2) \\
   &=\frac{1}{q} \sum_{i \in Q} {(y_i-\hat{f}_{S,\theta_1}(\bmx_i))}^2 - {(y_i-\hat{f}_{S,\theta_2}(\bmx_i))}^2 \\
   &=\frac{1}{q} \sum_{i \in Q} {(f^*(\bmx_i)-\hat{f}_{S,\theta_1}(\bmx_i))}^2 - {(f^*(\bmx_i)-\hat{f}_{S,\theta_2}(\bmx_i))}^2 \\
   &\quad \quad -2\epsilon_i(f^*(\bmx_i)-\hat{f}_{S,\theta_1}(\bmx_i)) + 2 \epsilon_i(f^*(\bmx_i)-\hat{f}_{S,\theta_2}(\bmx_i))\\
   &=\frac{1}{q} \sum_{i \in Q} {(f^*(\bmx_i)-f^0_{S,\theta_1}(\bmx_i) - \omega_1(s))}^2 - {(f^*(\bmx_i)-f^0_{S,\theta_2}(\bmx_i) - \omega_2(s))}^2 \\
   &\quad \quad -2\epsilon_i(f^*(\bmx_i)-f^0_{S,\theta_1}(\bmx_i) +  \omega_1(s) - f^*(\bmx_i)-f^0_{S,\theta_2}(\bmx_i) - \omega_2(s))\\
   &=\frac{1}{q} \sum_{i \in Q} {(f^*(\bmx_i)-f^0_{S,\theta_1}(\bmx_i))}^2 - \omega_1(s)(f^*(\bmx_i)-f^0_{S,\theta_1}(\bmx_i)) + {\omega_1(s)}^2\\
   &\quad \quad - {(f^*(\bmx_i)-f^0_{S,\theta_2}(\bmx_i))}^2 + \omega_2(s)(f^*(\bmx_i)-f^0_{S,\theta_2}(\bmx_i)) - {\omega_2(s)}^2 \\
   &\quad \quad -2\epsilon_i(f^*(\bmx_i)-f^0_{S,\theta_1}(\bmx_i) - f^*(\bmx_i) +f^0_{S,\theta_2}(\bmx_i)) -2\epsilon_i  (\omega_1(s) - \omega_2(s)).
\end{align*}
Here for $\ell = 1,2$, by the Bernstein's inequality, we have
\begin{align*}
	&\Pr \left( \left|  \frac{1}{q} \sum_{i \in Q} {(f^*(\bmx_i)-f^0_{S,\theta_\ell}(\bmx_i))}^2 - \mathrm{EL}(\theta_\ell)\right| \leq t_\ell \right) \\
    &\geq 1- 2\exp\left( - \frac{1}{2} \frac{t_\ell^2}{B_\sigma^2/q + 2B^2t_\ell /(3q)}\right) =: 1-p_\ell(t_\ell,q),
\end{align*}
for any $t_\ell > 0$.
Also, the Chebyshev's inequality yields
\begin{align*}
	\Pr\left(\left|\nu^2 -   \frac{1}{q} \sum_{i \in Q} \epsilon_i^2\right|\leq t\right) \geq 1-\frac{\nu^2}{q t}=:1-p_\nu(t,q),
\end{align*}
for all $t>0$.
Then, with probability $1-p_1(t_1,q)-p_2(t_2,q)-p_\nu(t_3,q)$, we obtain
\begin{align*}
	&\mathrm{CV}(\theta_1) - \mathrm{CV}(\theta_2)  \\
   &\leq \mathrm{EL}(\theta_1)+ t_1 - \mathrm{EL}(\theta_2 ) + t_2 \\
   &\quad \quad + \frac{1}{q} \sum_{i \in Q}  - \omega_1(s)(f^*(\bmx_i)-f^0_{S,\theta_1}(\bmx_i)) + {\omega_1(s)}^2+ \omega_2(s)(f^*(\bmx_i)-f^0_{S,\theta_2}(\bmx_i)) - {\omega_2(s)}^2 \\
   &\quad \quad -2\epsilon_i(f^*(\bmx_i)-f^0_{S,\theta_1}(\bmx_i) - f^*(\bmx_i) + f^0_{S,\theta_2}(\bmx_i)) -2\epsilon_i  (\omega_1(s) - \omega_2(s))\\
   &\leq -\Xi + t_1 + t_2+3 {\omega(s)}^2 + 2\omega(s) B+\nu^2 (4B+\omega(s)) + t_3(4B+2\omega(s)),  \\
\end{align*}
by applying the Cauchy-Schwarz inequality and $\omega(s) = \omega_1(s) \vee \omega_2(s)$.

Then, we can state that
\begin{align*}
	 \mathrm{CV}(\theta_1) \leq \mathrm{CV}(\theta_2),
\end{align*}
when the following holds;
\begin{align*}
	 t_1 + t_2 + t_3(4B+2\omega)\leq \Xi -3 {\omega(s)}^2 - 2\omega(s) B-\nu^2 (4B+\omega)=:\tilde{\Xi}(s).
\end{align*}
We set $t_1 = t_2 = t_3(4B+2\omega)=\tilde{\Xi}/3$ and substitute them, then we have
\begin{align*}
	 &1 - p_1(t_1,q) -  p_2(t_2,q)-p_\nu(t_3,q)\\
    &= 1- 4\exp\left( - \frac{1}{2} \frac{t^2}{B_\sigma^2/q + B^2t /(3q)}\right) -\frac{3\nu^2(4B+2\omega(s))}{q\tilde{\Xi}(s)}\\
    & \geq 1- 4\exp\left( - \frac{1}{2} \left( \frac{3qt}{B^2} - \frac{B_\sigma^2 9 q}{B^4} \right) \right) -\frac{3\nu^2(4B+2\omega(s))}{q\tilde{\Xi}(s)}\\
    & = 1- 4\exp\left( - \frac{q}{2} \left( \frac{1}{B^2}\tilde{\Xi}(s) - \frac{9 B_\sigma^2 }{B^4} \right) \right)  -\frac{3\nu^2(4B+2\omega(s))}{q\tilde{\Xi}(s)}.
\end{align*}
Then, we obtain the result.
\end{proof}

\section{Approximation Accuracy with Various Kernels with Other Data Sets}\label{supp:variouskernel}

Figures~\ref{fig:abalone}--\ref{fig:space_ga} show the approximation errors with various kernel functions as shown in Section~\ref{sec:exp-approx}, with different datasets.

\begin{figure}[t]
  \centering
  \includegraphics[width=.5\linewidth]{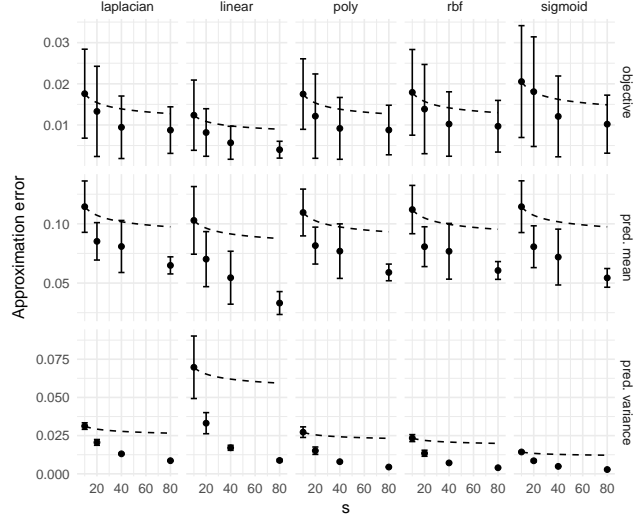}
\caption{Approximation errors on \texttt{abalone} data set. The setting is the same as Section~\ref{sec:exp-approx}.}\label{fig:abalone}
\end{figure}

\begin{figure}[t]
  \centering
  \includegraphics[width=.5\linewidth]{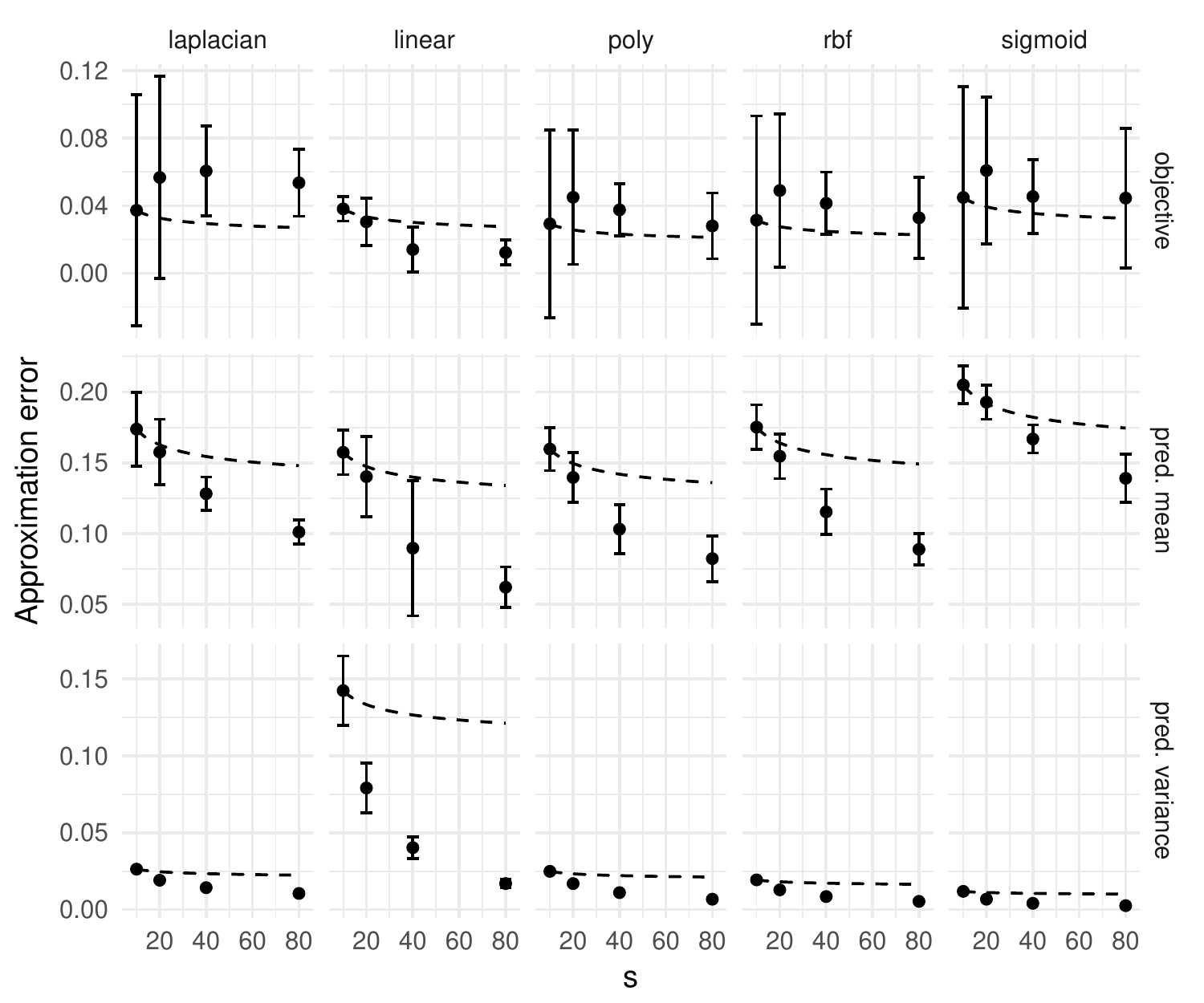}
\caption{Approximation errors on \texttt{cpusmall} data set.}
\end{figure}

\begin{figure}[t]
  \centering
  \includegraphics[width=.5\linewidth]{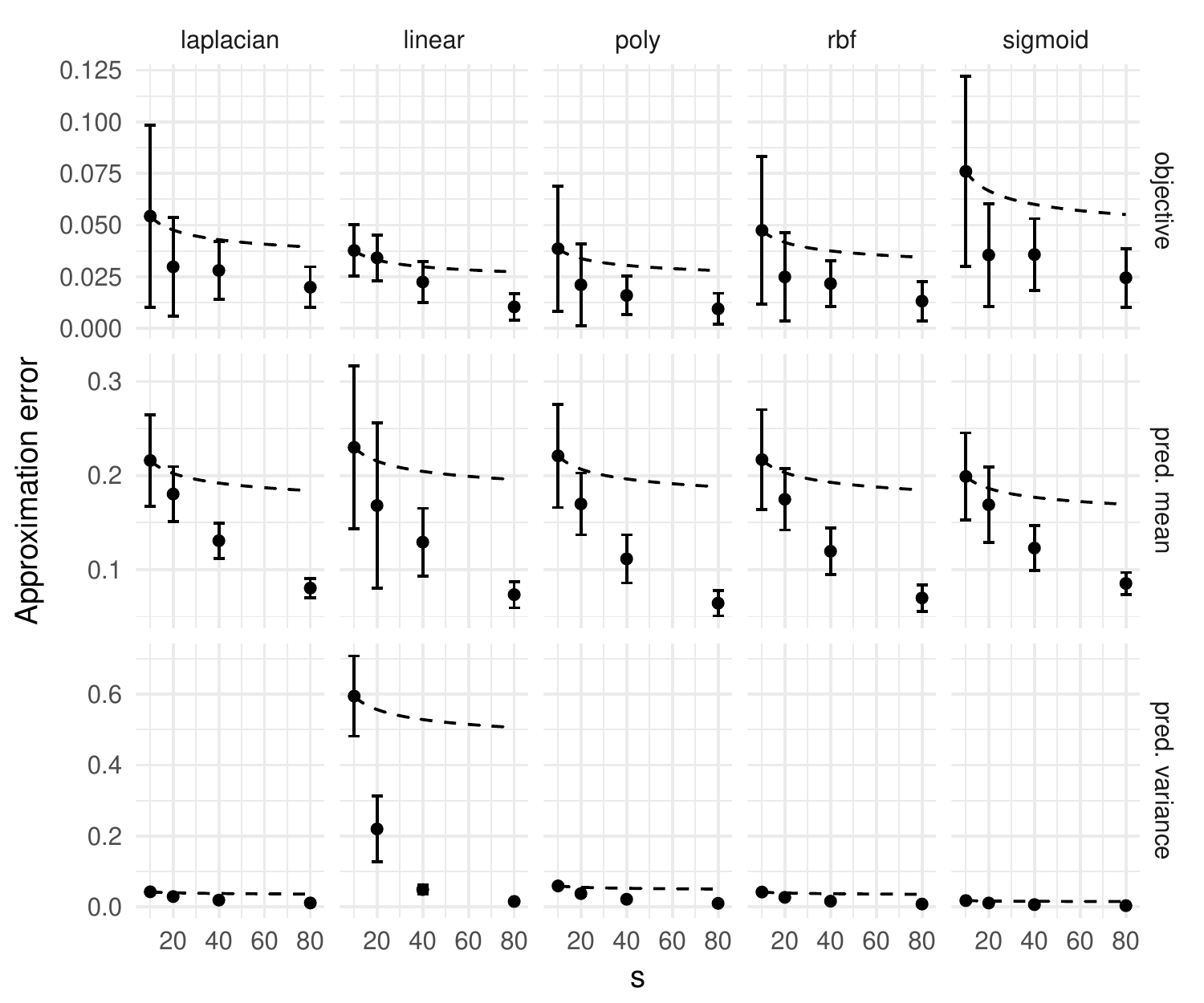}
\caption{Approximation errors on \texttt{housing} data set.}
\end{figure}

\begin{figure}[t]
  \centering
  \includegraphics[width=.5\linewidth]{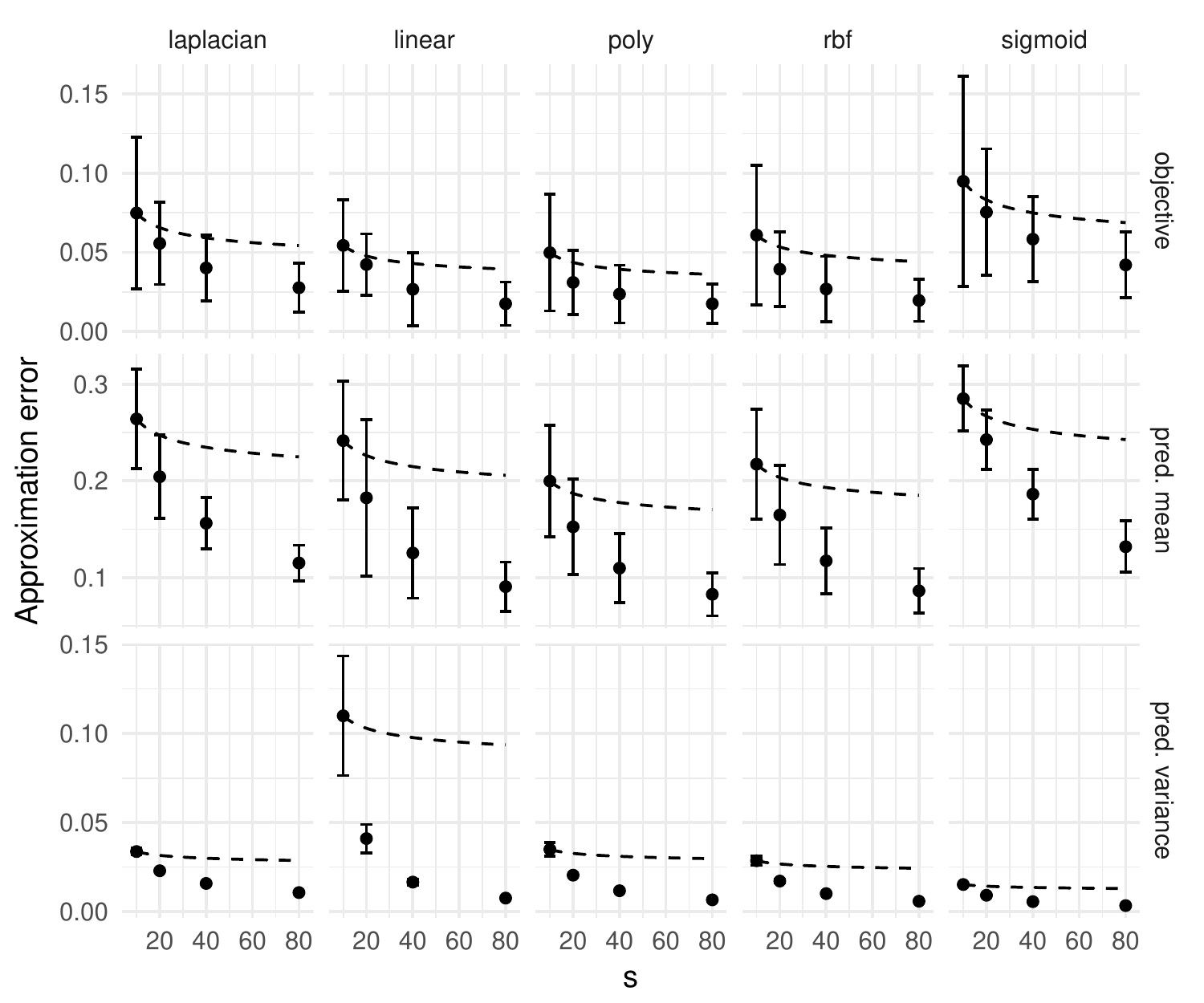}
\caption{Approximation errors on \texttt{mg} data set.}
\end{figure}

\begin{figure}[t]
  \centering
  \includegraphics[width=.5\linewidth]{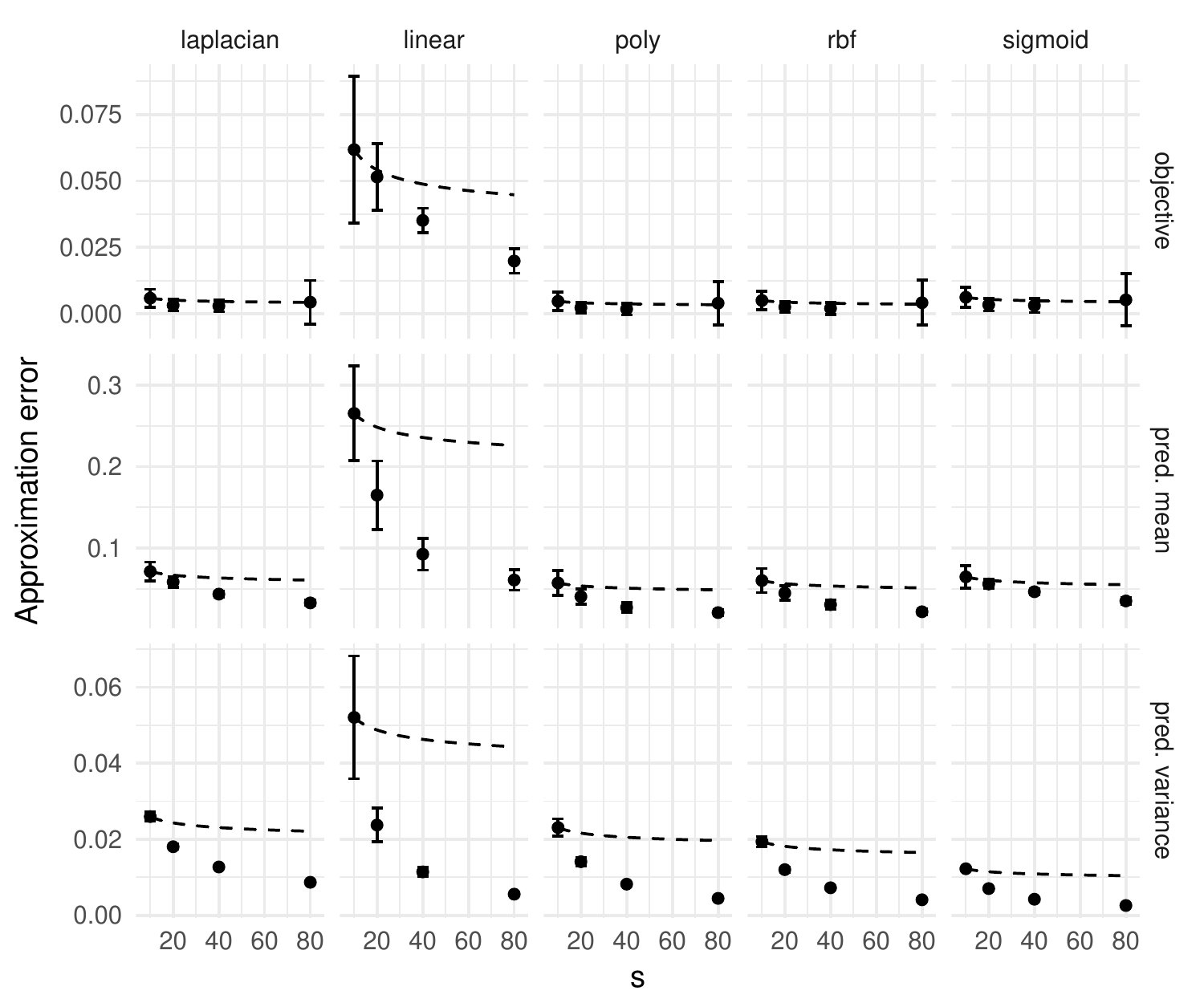}
\caption{Approximation errors on \texttt{space\_ga} data set.}\label{fig:space_ga}
\end{figure}

\end{document}